\documentclass{article}

\usepackage{microtype}
\usepackage{graphicx}
\usepackage{subcaption}
\usepackage{booktabs} % for professional tables

\usepackage{hyperref}

\usepackage[accepted]{icml2026}

\usepackage{amsmath}
\usepackage{amssymb}
\usepackage{mathtools}
\usepackage{amsthm}

\usepackage{graphicx} 
\usepackage{booktabs}
\usepackage{multirow} 
\usepackage{colortbl}
\usepackage{xcolor}   
\usepackage{threeparttable}
\usepackage{wrapfig}
\usepackage{subcaption}
\usepackage{listings}
\usepackage{enumitem}

\usepackage{bm}      
\usepackage{bbm}   

\usepackage{tensor}

\usepackage{float}
\usepackage{marvosym}

\lstdefinestyle{pycode}{
  language=Python,
  basicstyle=\ttfamily\footnotesize,
  numbers=left, numberstyle=\tiny, numbersep=6pt,
  frame=tb, rulecolor=\color{black},
  showstringspaces=false, columns=fullflexible,
  breaklines=true, tabsize=2,
  keywordstyle=\bfseries,
  commentstyle=\itshape\color{gray},
  stringstyle=\color{teal!60!black}
}
\usepackage[capitalize,noabbrev]{cleveref}

\theoremstyle{plain}
\newtheorem{theorem}{Theorem}[section]
\newtheorem{proposition}[theorem]{Proposition}

\newtheorem{corollary}[theorem]{Corollary}
\theoremstyle{definition}

\newtheorem{assumption}[theorem]{Assumption}
\theoremstyle{remark}

\usepackage[textsize=tiny]{todonotes}

\icmltitlerunning{Threshold-Guided Optimization for Visual Generative Models}

\begin{document}

\twocolumn[
  \icmltitle{Threshold-Guided Optimization for Visual Generative Models}

  % It is OKAY to include author information, even for blind submissions: the
  % style file will automatically remove it for you unless you've provided
  % the [accepted] option to the icml2026 package.

  % List of affiliations: The first argument should be a (short) identifier you
  % will use later to specify author affiliations Academic affiliations
  % should list Department, University, City, Region, Country Industry
  % affiliations should list Company, City, Region, Country

  % You can specify symbols, otherwise they are numbered in order. Ideally, you
  % should not use this facility. Affiliations will be numbered in order of
  % appearance and this is the preferred way.
  \icmlsetsymbol{equal}{*}
 % remove ^1 in icml2026.sty for arxiv
  \begin{icmlauthorlist}
    \icmlauthor{Jinbin Bai}{nus,collov}
    \icmlauthor{Yu Lei}{nus}
    \icmlauthor{Qingyu Shi}{pku}
    \icmlauthor{Aosong Feng}{yale}
    \icmlauthor{Yi Xin}{sii}
    \icmlauthor{Zhuoran Zhao}{nus}
    \icmlauthor{Fei Shen}{nus}
    %\icmlauthor{}{sch}
    \icmlauthor{Kaidong Yu}{nus}
    \icmlauthor{Jason Li}{pku}
    %\icmlauthor{}{sch}
    %\icmlauthor{}{sch}
  \end{icmlauthorlist}

  \icmlaffiliation{nus}{National University of Singapore}
  \icmlaffiliation{collov}{Collov Labs}
  \icmlaffiliation{pku}{Peking University}
  \icmlaffiliation{yale}{Yale University}
  \icmlaffiliation{sii}{Shanghai Innovation Institute}
  % \icmlaffiliation{comp}{MeissonFlow Research}

  \icmlcorrespondingauthor{Jinbin Bai}{jinbin.bai@u.nus.edu}

  % You may provide any keywords that you find helpful for describing your
  % paper; these are used to populate the "keywords" metadata in the PDF but
  % will not be shown in the document
  \icmlkeywords{Machine Learning, ICML}

  \vskip 0.3in
]

% this must go after the closing bracket ] following \twocolumn[ ...

% This command actually creates the footnote in the first column listing the
% affiliations and the copyright notice. The command takes one argument, which
% is text to display at the start of the footnote. The \icmlEqualContribution
% command is standard text for equal contribution. Remove it (just {}) if you
% do not need this facility.

% Use ONE of the following lines. DO NOT remove the command.
% If you have no special notice, KEEP empty braces:
\printAffiliationsAndNotice{}  % no special notice (required even if empty)
% Or, if applicable, use the standard equal contribution text:
% \printAffiliationsAndNotice{\icmlEqualContribution}
\begin{abstract}

Aligning large visual generative models with human feedback is often performed through pairwise preference optimization. 
While such approaches are conceptually simple, they fundamentally rely on annotated pairs, limiting scalability in settings where feedback is collected as independent scalar ratings.
In this work, we revisit the KL-regularized alignment objective and show that the optimal policy implicitly compares each sample’s reward to an instance-specific baseline that is generally intractable. 
We propose a threshold-guided alignment framework that replaces this oracle baseline with a data-driven global threshold estimated from empirical score statistics. 
This formulation turns alignment into a binary decision task on unpaired data, enabling effective optimization directly from scalar feedback. 
We also incorporate a confidence weighting term to emphasize samples whose scores deviate strongly from the threshold, improving sample efficiency. 
Experiments across both diffusion and masked generative paradigms, spanning three test sets and five reward models, show that our method consistently improves preference alignment over previous methods. 
These results position our threshold-guided framework as a simple yet principled alternative for aligning visual generative models without paired comparisons.
\end{abstract}    
\section{Introduction}
\label{sec:intro}

\begin{figure}[t]
  \centering
  \includegraphics[width=\linewidth]{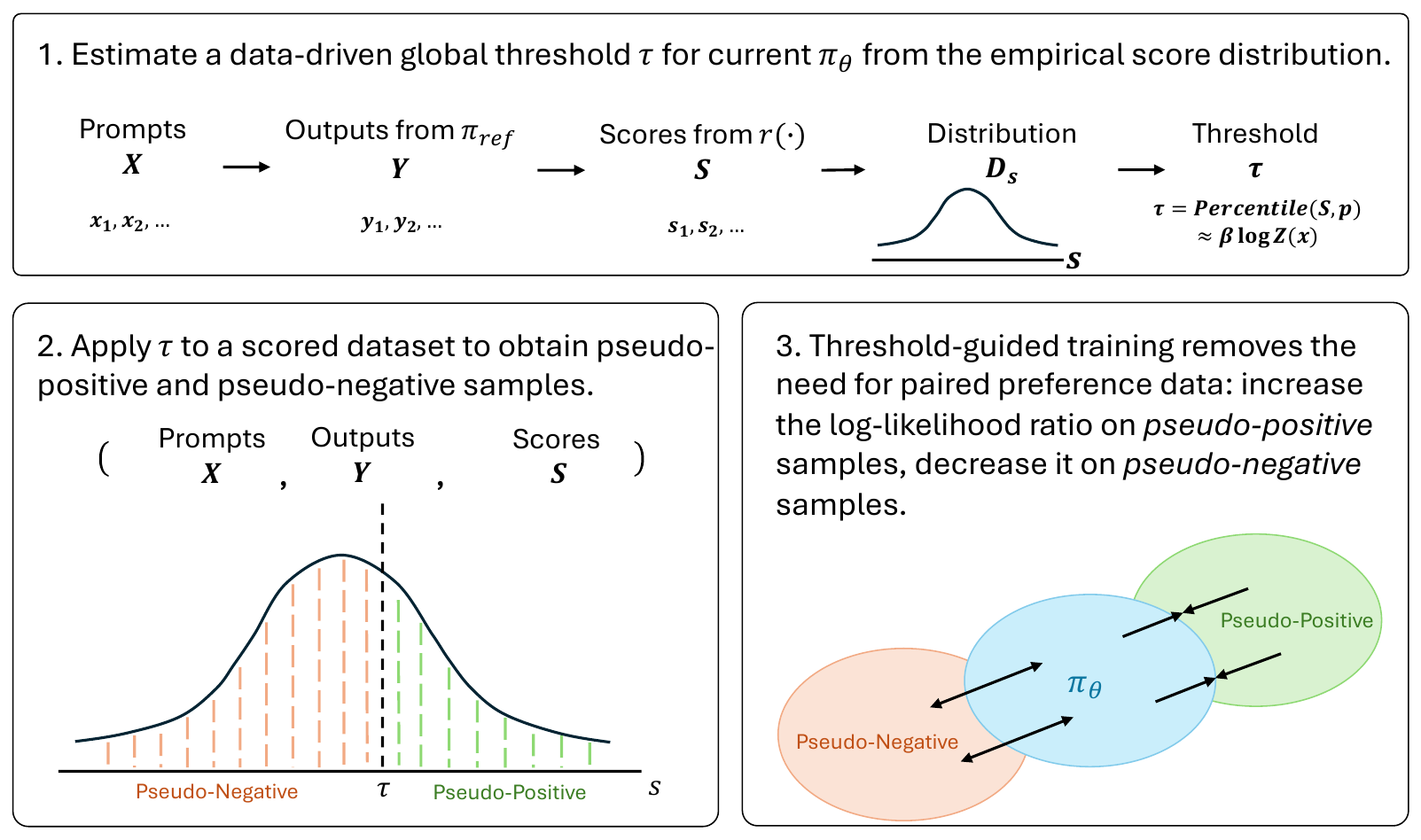}
  \caption{The overview of the threshold-guided optimization framework for visual generative models. Scalar feedback is converted into pseudo-positive/pseudo-negative labels via a data-driven threshold, with confidence weighting based on the distance to the threshold.}
  \label{fig:upo}
  \vspace{-10pt}
\end{figure}

Aligning large generative models with human feedback is a central challenge during the post-training stage. 
Reinforcement Learning from Human Feedback (RLHF)~\citep{christiano2017deep,achiam2023gpt,ouyang2022training,stiennon2020learning} formulates this problem as the optimization of a KL-regularized policy objective, demonstrating that complex human values can be incorporated through learned rewards. 
However, the operational complexity and instability of RL-style optimization~\citep{rafailov2023direct} have motivated a shift towards simpler, more stable policy-fitting methods. 
Direct Preference Optimization (DPO) reframes alignment as a classification problem over pairs of preferred $(y_w)$ and rejected $(y_l)$ responses. 
Building on closed-form solutions of KL-regularized reinforcement learning objectives~\citep{peters2007reinforcement,peng2019advantage,go2023aligning}, DPO sidesteps explicit reward modeling and RL rollouts: the intractable partition function $Z(x)$ cancels when 
taking differences between two responses~\citep{rafailov2023direct}, yielding an objective equivalent to fitting a reparameterized Bradley-Terry model~\citep{bradley1952rank}.

A central limitation of DPO~\citep{rafailov2023direct} and its successors~\citep{meng2024simpo,ethayarajh2024kto,liu2024lipo} is their fundamental reliance on paired preference data. 
In many practical settings, especially for visual generative models, feedback is more naturally collected as unpaired samples with scalar scores: for instance, $1$–$5$ star ratings from users or continuous outputs from a reward model. 
While such data can in principle be converted into pairs (\textit{e.g.}, by comparing scores within a batch), this conversion is ad hoc, discards information about the absolute scale of scores, and can amplify noise when scores cluster tightly. 
This creates a gap between the pairwise assumptions baked into DPO-style objectives and the scalar nature of many real-world feedback signals.

In this work, we propose a \emph{threshold-guided optimization framework} that operates directly on unpaired scalar feedback, as illustrated in Figure~\ref{fig:upo}. 
The key idea is to convert absolute scores into preference-style supervision through a \textbf{thresholding mechanism}: Samples whose scores exceed a data-driven threshold are treated as ``pseudo-positive'', while those below the threshold are treated as ``pseudo-negative''. 
This yields a binary decision signal on unpaired data, enabling a DPO-like classification loss without explicitly constructing pairs. 
We further introduce a \textbf{confidence-weighting mechanism} that scales each sample's contribution by its deviation from the threshold, allowing the model to learn more from high-confidence examples while still using the full dataset.

Our framework is grounded in a theoretical analysis of the KL-regularized alignment objective (Eq.~\ref{eq:kl_rl}). 
We show that for any individual sample, whether to increase or decrease its probability relative to a reference model, the optimal policy update is governed by an elegant but intractable decision rule: the sample's reward must be compared against an instance-dependent oracle baseline $\tau^*(x) = \beta \log Z(x)$. 
This perspective makes explicit the core obstacle that has constrained both RLHF and policy-fitting methods: the intractability of the baseline $\tau^*(x)$ for each input $x$. 
Our threshold-guided framework introduces a \textbf{tractable proxy} for this ideal rule by replacing the oracle baseline with a global threshold $\tau$ estimated from empirical score statistics. 
This reformulation turns preference alignment into a binary classification problem on unpaired data, and the resulting estimator enjoys desirable consistency and calibration properties. 
The confidence weighting further refines this proxy by leveraging the full magnitude of the scalar scores.

We validate the proposed framework under two dominant paradigms in vision-centric generative modeling: 
diffusion-based models~\citep{ho2020denoising} trained with mean squared error (MSE) objectives, and MaskGIT-style masked token models~\citep{chang2022maskgit} trained with cross-entropy. 
Our empirical evaluation covers multiple popular foundation models (Stable Diffusion v1.5~\citep{rombach2022high}, Meissonic~\citep{bai2024meissonic}, FLUX~\citep{flux2024}, and Wan 1.3B~\citep{wan2025wan}) and five widely used reward models (HPSv2.1~\citep{wu2023human}, PickScore~\citep{kirstain2023pick}, ImageReward~\citep{xu2023imagereward}, CLIP Score~\citep{radford2021learning}, and LAION Aesthetic Score~\citep{schuhmann2022laion}). 
Across these settings, threshold-guided optimization consistently improves preference alignment over previous optimization methods. 
Together, these findings position our framework as a simple yet principled approach to preference alignment for visual generative models without relying on paired comparisons.

In summary, our contributions are as follows:
\begin{itemize}
\item We introduce a threshold-guided alignment framework for visual generative models that operates directly on unpaired scalar feedback, addressing a key limitation of existing pairwise preference optimization methods.
\item We provide a principled derivation of this framework as a tractable approximation to the KL-optimal decision rule, revealing how a data-driven threshold and confidence weighting arise naturally from the underlying objective.
\item We demonstrate through comprehensive experiments on diffusion and masked generative paradigms that threshold-guided optimization achieves consistent preference alignment gains over previous optimization methods across multiple reward models.
\end{itemize}

\section{Related Work}
\label{sec:related_work}
\noindent

\paragraph{Generative Models.}
Generative modeling has witnessed a rapid evolution, from Generative Adversarial Networks (GANs) \citep{goodfellow2020generative} and Variational Autoencoders (VAEs) \citep{kingma2013auto}, to the current dominance of diffusion models \citep{sohl2015deep, ho2020denoising,2023SDXL,dalle3,flux2024} and masked generative transformers~\citep{chang2022maskgit}. Denoising diffusion probabilistic models (DDPMs) have established a new state-of-the-art in high-fidelity image synthesis by iteratively reversing a noise-injection process. Their remarkable generative quality and training stability have made them the \textit{de-facto} architecture for large-scale text-to-image systems.
A key breakthrough enabling granular control over the generation process was classifier-free guidance \citep{ho2022classifier}, which allows a trade-off between sample fidelity and diversity without an external classifier.
While classifier-free guidance provides a powerful mechanism for conditioning on explicit text prompts, it does not inherently address alignment with more abstract or ineffable human preferences, such as aesthetic appeal, compositional coherence, or stylistic nuance. This limitation necessitates a more direct approach to learn from human or model-based feedback.

\paragraph{Preference Optimization for Language Models.}
The challenge of aligning powerful base models with human intent was first studied systematically in large language models (LLMs). 
Reinforcement Learning from Human Feedback (RLHF)~\citep{christiano2017deep} formulates alignment as optimizing 
a KL-regularized policy objective using a learned reward model. 
The standard RLHF pipeline~\citep{ouyang2022training,achiam2023gpt} consists of supervised fine-tuning (SFT), 
reward model training on human preference labels, and reinforcement learning (typically PPO~\citep{schulman2017proximal}) to maximize the learned reward under a KL constraint to a reference policy. 
Despite its empirical success, RLHF is complex and brittle in practice, requiring multiple models and inheriting the optimization instability of deep RL.

These difficulties have motivated a family of \emph{policy fitting} methods that optimize directly on preference data. 
Direct Preference Optimization (DPO)~\citep{rafailov2023direct} casts alignment as binary classification over pairs of preferred and rejected responses. 
By exploiting a closed-form solution to the KL-regularized objective, DPO directly relates this classification loss to the optimal policy, bypassing explicit reward modeling and RL rollouts. 
Subsequent work has generalized the basic formulation along several axes, including improvements to variance and regularization~\citep{meng2024simpo,liu2025lipo} and kernel- or transportation-based objectives~\citep{ethayarajh2024kto}.

More recently, several works have sought to relax the reliance on strictly paired preferences and to exploit more general feedback structures. 
From a probabilistic inference viewpoint, \citet{abdolmaleki2024preference} propose a framework that can learn from unpaired positive and negative examples, derived via an expectation–maximization procedure and regularized by a KL constraint to a reference policy. 
In parallel, \citet{matrenok2025quantile} introduces Quantile Reward Policy Optimization (QRPO), which fits policies directly to scalar rewards by transforming them into quantiles. 
This transformation makes the induced reward distribution uniform and renders the partition function analytically tractable, leading to a simple regression-style objective while still operating within the KL-regularized policy optimization framework. 
Our work is conceptually related to these methods. 
We revisit the KL-regularized alignment objective and seek a tractable surrogate that can be optimized from scalar feedback. 
However, instead of quantile transformations or EM-based inference, we focus on a threshold-guided formulation that converts scores into binary decisions via a data-driven baseline, together with a confidence-weighted loss.

\paragraph{Preference Optimization for Vision Models.}
Following the success of preference optimization in LLMs, similar ideas have been adapted to vision and, in particular, diffusion models~\citep{lee2023aligning,yang2024dense,deng2024prdp,yang2024using,li2024aligning,ren2025refining,yang2025ipo,zhang2024onlinevpo}. 
Early work largely mirrored the RLHF pipeline, first training an explicit reward or aesthetic model from human judgments~\citep{schuhmann2022laion} and then fine-tuning the diffusion process with RL, as in DDPO~\citep{black2023training}. 
While effective, these methods inherit the complexity and instability of RL-based training.

Inspired by DPO, a line of work has proposed direct preference optimization for diffusion~\citep{wallace2024diffusion}, adapting the DPO loss to the diffusion setting to improve aesthetics and prompt faithfulness without explicit RL. 
Related approaches such as generalized reinforcement policy optimization (GRPO)~\citep{xue2025dancegrpo,liu2025flow} further explore KL-regularized updates for generative models. 
However, most of these methods fundamentally rely on pairwise preference data, or on pairs synthesized from scalar scores, and therefore do not fully exploit the information contained in absolute ratings. 
In practice, feedback for visual generation is often available as unpaired scalar scores: either from human raters or from learned reward models.

In contrast, we focus on \emph{threshold-guided optimization} for visual generative models under scalar feedback. 
We revisit the same KL-regularized objective underlying RLHF and DPO and derive a tractable surrogate based on comparing scores to a data-driven threshold. 
This yields a classification-style loss that can be optimized directly on unpaired, scored data, providing a complementary path to preference alignment that does not require explicit preference pairs or RL-style training.
Our work is also related to recent visual preference-optimization methods that relax standard pairwise supervision. Diffusion-KTO~\citep{li2024aligning} operates on desirable and undesirable samples under a Kahneman-Tversky objective, while RankDPO~\citep{karthik2025scalable} uses ranked or listwise supervision. By contrast, TGO is designed for independently scored samples: the scalar score determines both the threshold-induced pseudo-label and the confidence weight, without requiring explicit pairs or ranked candidate groups.
\section{Method}
\label{sec:method}

We begin by revisiting the KL-regularized objective that underlies modern preference-based alignment methods and highlight the role of an instance-dependent oracle baseline. 
We then show how pairwise policy fitting methods such as DPO exploit this structure to avoid the intractable partition function, and why this breaks down in the unpaired, scalar-feedback setting. 
Finally, we derive a threshold-guided surrogate objective that enables direct optimization from unpaired scores and instantiate it for visual generative models.

\subsection{Preliminaries: KL-Regularized RL and Policy Fitting}

Many alignment methods seek a policy $\pi_\theta$ that maximizes expected reward while remaining close to a reference policy $\pi_{\mathrm{ref}}$~\citep{ziebart2010modeling,jaques2019way}. 
This is captured by the KL-regularized objective
\begin{equation}
\label{eq:kl_rl}
\resizebox{0.9\linewidth}{!}{$
\max_{\pi_{\theta}} 
\mathbb{E}_{x \sim \mathcal{D},\, y \sim \pi_{\theta}(\cdot|x)}
  \big[\mathcal{R}(x,y)\big] 
- \beta\, \mathbb{D}_{\mathrm{KL}}\!\left(
    \pi_{\theta}(\cdot|x)\,\Vert\,\pi_{\mathrm{ref}}(\cdot|x)
  \right)
$}
\end{equation}

where $\mathcal{R}(x,y)$ is a reward function and $\beta$ controls the strength of regularization.

This objective admits a closed-form optimal policy
\begin{equation}
\label{eq:optimal_policy}
\pi^{*}(y|x)=\frac{1}{Z(x)}\pi_{\mathrm{ref}}(y|x)\exp\!\left(\frac{1}{\beta}\mathcal{R}(x,y)\right),
\end{equation}
where $Z(x) = \sum_{y}\pi_{\mathrm{ref}}(y|x)\exp(\mathcal{R}(x,y)/\beta)$ is a per-input partition function. 
Directly optimizing through Eq.~\eqref{eq:optimal_policy} is intractable because $Z(x)$ requires summing over all possible outputs $y$, an infinite space in realistic language and vision models.

Taking logarithms and rearranging yields
\begin{equation}
\label{eq:reparam_reward}
\mathcal{R}(x, y) 
= \beta \log \frac{\pi^{*}(y|x)}{\pi_{\mathrm{ref}}(y|x)} 
+ \beta \log Z(x).
\end{equation}
The log-ratio $\log \frac{\pi^{*}(y|x)}{\pi_{\mathrm{ref}}(y|x)}$ is thus determined by the reward shifted by an 
\emph{oracle baseline} $\tau^*(x) = \beta \log Z(x)$:
\begin{equation}
\label{eq:oracle_rule}
\log \frac{\pi^{*}(y|x)}{\pi_{\mathrm{ref}}(y|x)} > 0 
\quad\Longleftrightarrow\quad \mathcal{R}(x,y) > \tau^*(x).
\end{equation}
In words, the KL-optimal decision rule increases the probability of a sample if and only if its reward exceeds an 
instance-dependent baseline $\tau^*(x)$, which is itself intractable because it depends on $Z(x)$.

\paragraph{Pairwise policy fitting.}
DPO~\citep{rafailov2023direct} and related methods circumvent the need to compute $\tau^*(x)$ by working with \emph{pairwise} preferences. 
Under the Bradley-Terry model~\citep{bradley1952rank}, the probability that $y_w$ is preferred to $y_l$ given $x$ is
\begin{equation}
\label{eq:bt_model}
p(y_w \succ y_l \mid x) 
= \sigma\big( \mathcal{R}(x, y_w) - \mathcal{R}(x, y_l) \big),
\end{equation}
where $\sigma(\cdot)$ is the logistic function. 
Substituting Eq.~\eqref{eq:reparam_reward} into Eq.~\eqref{eq:bt_model} gives
\begin{equation}
\label{eq:z_cancelled}
\mathcal{R}(x, y_w) - \mathcal{R}(x, y_l) 
= \beta \log \frac{\pi_{\theta}(y_w|x)}{\pi_{\mathrm{ref}}(y_w|x)}
- \beta \log \frac{\pi_{\theta}(y_l|x)}{\pi_{\mathrm{ref}}(y_l|x)},
\end{equation}
in which the oracle baseline $\tau^*(x)$ cancels out. 
This leads to the DPO loss
\begin{align}
\mathcal{L}_{\mathrm{DPO}}(\pi_{\theta};\pi_{\mathrm{ref}})
&= - \mathbb{E}_{(x, y_w, y_l) \sim \mathcal{D}} \Big[
\log \sigma\Big(
  \beta \log \frac{\pi_{\theta}(y_w|x)}{\pi_{\mathrm{ref}}(y_w|x)}
\nonumber\\[-2pt]
&\qquad\qquad\qquad
  - \beta \log \frac{\pi_{\theta}(y_l|x)}{\pi_{\mathrm{ref}}(y_l|x)}
\Big)
\Big],
\label{eq:dpo_loss}
\end{align}
which optimizes a classification-style objective without ever computing $Z(x)$.

Crucially, this derivation relies on access to \emph{paired} samples $(y_w,y_l)$ for the same prompt $x$. 
When supervision is provided instead as \emph{unpaired} scalar scores $s$ for individual samples, the pairwise cancellation in Eq.~\eqref{eq:z_cancelled} is no longer available. 
In this setting, we must confront the oracle baseline $\tau^*(x)$ more directly.

\subsection{Threshold-Guided Optimization from Scalar Feedback}
\label{sec:threshold-guided}

We consider the common situation where supervision is available as unpaired scalar feedback:
\[
\mathcal{D} = \{(x_i, y_i, s_i)\}_{i=1}^n,
\]
where $s_i \in \mathbb{R}$ is a score from a human annotator or a reward model. 
We treat $s$ as a noisy but informative proxy for the latent reward $\mathcal{R}(x,y)$ in Eq.~\eqref{eq:kl_rl}. 
Our goal is to design a tractable surrogate objective that (approximately) follows the KL-optimal decision rule in Eq.~\eqref{eq:oracle_rule} without computing $\tau^*(x)$ or constructing explicit preference pairs.

\subsubsection{Ideal KL-Optimal Decision Rule}

Rewriting Eq.~\eqref{eq:oracle_rule} in terms of the policy ratio gives
\begin{equation}
\label{eq:ideal_decision}
\pi^{*}(y|x) 
\begin{cases}
> \pi_{\mathrm{ref}}(y|x), & \text{if } \mathcal{R}(x,y) > \tau^*(x),\\[2pt]
< \pi_{\mathrm{ref}}(y|x), & \text{if } \mathcal{R}(x,y) < \tau^*(x).
\end{cases}
\end{equation}
As shown in Appendix~\ref{app:proof_monotonicity}, the policy ratio $\pi^{*}(y|x)/\pi_{\mathrm{ref}}(y|x)$ is strictly increasing in $\mathcal{R}(x,y)$,  so Eq.~\eqref{eq:ideal_decision} indeed defines an optimal classification rule. 
The difficulty is that the decision boundary $\tau^*(x)=\beta \log Z(x)$ is input-dependent and intractable.

\subsubsection{Data-Driven Thresholding and Pseudo-Preferences}

We approximate the ideal rule in Eq.~\eqref{eq:ideal_decision} using two standard modeling assumptions: 
First, the observed score $s$ is a monotone transform of the latent reward $\mathcal{R}(x,y)$, so comparing rewards can be approximated by comparing scores.
Second, the oracle baseline $\tau^*(x)$ can be replaced by a \emph{data-driven threshold} $\tau$ estimated from the empirical score distribution.

Concretely, given scores $\{s_i\}$, we set
\[
\tau = \mathrm{Percentile}(\{s_i\}, p),
\]
with $p$ typically chosen as the median ($p{=}0.5$). 
We then define a pseudo-preference label
\[
l = \mathbb{1}[s \ge \tau],
\]
so that samples with $s \ge \tau$ are treated as ``pseudo-positive'' (desirable) and those with $s < \tau$ as ``pseudo-negative'' (undesirable). 
This yields the following surrogate decision rule:
\begin{equation}
(x,y,s) \mapsto
\begin{cases}
\pi_\theta(y|x) \gtrsim \pi_{\mathrm{ref}}(y|x), & \text{if } s \ge \tau,\\[2pt]
\pi_\theta(y|x) \lesssim \pi_{\mathrm{ref}}(y|x), & \text{if } s < \tau,
\end{cases}
\label{eq:surrogate_rule}
\end{equation}
which mimics the sign of the KL-optimal update using a single global threshold. 
In practice, $\tau$ can be estimated once per epoch or from a proxy dataset generated by the reference policy (Sec.~\ref{sec:upo_impl}).

\subsection{Threshold-Guided Objective}

To turn the surrogate rule in Eq.~\eqref{eq:surrogate_rule} into a learning objective, 
we define an \emph{implicit policy score}
\[
\hat{s}_{\theta,\mathrm{ref}}(x,y) 
= \beta \log \frac{\pi_{\theta}(y|x)}{\pi_{\mathrm{ref}}(y|x)},
\]
which corresponds to the log-ratio in Eq.~\eqref{eq:reparam_reward}. 
We then train a classifier that predicts whether a sample should be above or below the threshold:
\begin{equation}
\label{eq:upo_loss}
\begin{aligned}
\mathcal{L}_{\mathrm{TG}}(\pi_\theta)
&= - \mathbb{E}_{(x,y,s) \sim \mathcal{D}} \Big[
    w(s, \tau) \big(
        l \log \sigma(\hat{s}_{\theta,\mathrm{ref}}) \\
&\qquad\qquad\qquad\qquad
        + (1-l) \log(1 - \sigma(\hat{s}_{\theta,\mathrm{ref}}))
    \big)
\Big],
\end{aligned}
\end{equation}
where $\sigma(\cdot)$ is the logistic function, $l = \mathbb{1}[s \ge \tau]$, and $w(s,\tau)$ is a confidence weight.

\paragraph{Confidence weighting.}
Scores far from the threshold provide less ambiguous supervision than scores near $\tau$. 
We encode this intuition by setting
\[
w(s,\tau) = 1 + c\,|s-\tau|,
\]
with a hyperparameter $c \ge 0$ controlling the strength of reweighting. 
This places more emphasis on high-confidence samples while still using the full dataset.

Minimizing $\mathcal{L}_{\mathrm{TG}}$ guides $\pi_\theta$ to assign larger log-probability ratios to pseudo-positive samples than to pseudo-negative ones, thereby implementing a threshold-guided approximation to the KL-optimal rule in Eq.~\eqref{eq:oracle_rule}.

\subsection{Theoretical Properties}

Although the surrogate objective in Eq.~\eqref{eq:upo_loss} is motivated by tractability, it is important to understand its statistical behavior. 
Our analysis (Appendix~\ref{app:upo_theory}) studies the estimator that minimizes the \emph{population} version of $\mathcal{L}_{\mathrm{TG}}$ and then relates the empirical minimizer to this population optimum.

\begin{theorem}[Informal guarantees]
\label{thm:upo-guarantees}
Let $\theta^\star$ denote the population minimizer of $\mathcal{L}_{\mathrm{TG}}$ 
under mild regularity conditions, and let $\hat\theta_n$ be the empirical minimizer on $n$ i.i.d. samples. Then:
\begin{enumerate}
    \item \textbf{Consistency.} $\hat\theta_n \to \theta^\star$ as $n \to \infty$; in particular, the empirical estimator converges to the population optimum of the surrogate objective.
    \item \textbf{Asymptotic bias.} The estimation error admits an expansion of order $O(1/n)$ whose leading term depends on the curvature, variance, and skewness of $\mathcal{L}_{\mathrm{TG}}$ (see Appendix~\ref{app:upo_theory}). 
    \item \textbf{Calibration.} The classifier induced by the threshold $\tau$ approximates the KL-optimal decision rule $\mathbb{1}[\mathcal{R}(x,y) > \tau^*(x)]$ up to a quantifiable estimation error that vanishes as the score distribution is estimated more accurately.
\end{enumerate}
\end{theorem}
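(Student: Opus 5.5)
We treat $\mathcal{L}_{\mathrm{TG}}$ as an M-estimation problem. Write the per-sample loss as $\ell_\theta(x,y,s) = -w(s,\tau)\big(l\log\sigma(\hat s_\theta) + (1-l)\log(1-\sigma(\hat s_\theta))\big)$, with population risk $L(\theta)=\mathbb{E}[\ell_\theta]$ and empirical risk $L_n(\theta)=\frac1n\sum_i \ell_\theta(x_i,y_i,s_i)$. The "mild regularity conditions" are made explicit as follows:
\begin{itemize}
\item $\Theta$ is compact.
\item $\theta\mapsto \log\pi_\theta(y|x)$ is $C^3$, with derivatives dominated by integrable envelopes.
\item $\mathbb{E}[(1+c|s-\tau|)^2]<\infty$.
\item $\theta^\star$ is the unique minimizer of $L$, lies in the interior of $\Theta$, and has $H=\nabla^2 L(\theta^\star)\succ 0$.
\end{itemize}
Initially $\tau$ is taken as fixed (population percentile). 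Afterward we show that plugging in the empirical percentile $\hat\tau_n$ does not change the conclusions, provided the score distribution has a continuous, positive density at $\tau$.

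\emph{Consistency.} The logistic log-loss is Lipschitz in $\hat s_\theta$ with constant 1 and is weighted by an integrable $w$. The envelope conditions therefore give a uniform law of large numbers $\sup_\Theta|L_n-L|\to 0$ a.s., for example via a Glivenko--Cantelli argument for a Lipschitz-in-parameter class. Identifiability plus compactness then yield $\hat\theta_n\to\theta^\star$ by the standard argmin-consistency theorem (Wald/van der Vaart Thm.~5.7).

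For the plug-in threshold, $\hat\tau_n\to\tau$ a.s. The only discontinuity of $\ell_\theta$ in $\tau$ comes from the indicator $l=\mathbb{1}[s\ge\tau]$. Its effect on $L_n$ is bounded by $\sup_\theta|\ell_\theta|$ times the empirical mass of $s$ between $\tau$ and $\hat\tau_n$, and this mass tends to zero under the density assumption.

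\emph{Asymptotic bias.} Expand the first-order condition $\nabla L_n(\hat\theta_n)=0$ around $\theta^\star$ to third order. Writing $g_n=\nabla L_n(\theta^\star)$, $H_n=\nabla^2L_n(\theta^\star)$ and $T=\nabla^3L(\theta^\star)$, solve iteratively:
\[
\hat\theta_n-\theta^\star = -H^{-1}g_n + H^{-1}(H_n-H)H^{-1}g_n - \tfrac12 H^{-1}T[H^{-1}g_n,H^{-1}g_n] + O_p(n^{-3/2}).
\]
Taking expectations kills the first term because $\mathbb{E}g_n=0$. The remaining two terms are each $O(1/n)$:
\begin{itemize}
\item the cross-covariance $\mathrm{Cov}(\nabla^2\ell,\nabla\ell)$, a skewness-type term;
\item the third-derivative (curvature) tensor contracted against the gradient covariance $\Sigma=\mathrm{Var}(\nabla\ell)$ (the variance).
\end{itemize}
This gives the stated expansion $\mathbb{E}\hat\theta_n-\theta^\star = b/n + o(1/n)$. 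Converting the $O_p$ remainder into a bound on the expectation needs uniform integrability, which requires a moment condition on $w$ of order slightly above 3.

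\emph{Calibration.} For fixed $(x,y)$, minimize the conditional risk pointwise over the free value $u=\hat s$. The minimizer is $\sigma(u^\star)=\eta_w(x,y)$, the $w$-weighted conditional probability that $s\ge\tau$. Hence $\mathrm{sign}(u^\star)=\mathrm{sign}(\eta_w-\tfrac12)$, which is the classification calibration of log-loss.

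Under the monotone-transform assumption $s=g(\mathcal{R})+\text{noise}$, the event $\eta_w>\tfrac12$ coincides with $\mathcal{R}>g^{-1}(\tau)$ up to noise. The surrogate classifier therefore disagrees with $\mathbb{1}[\mathcal{R}>\tau^*(x)]$ only on the set where $\mathcal{R}$ lies between $g^{-1}(\tau)$ and $\tau^*(x)$, together with a noise-margin set. We bound this disagreement probability by $P(|\mathcal{R}-\tau^*(x)|\le |g^{-1}(\tau)-\tau^*(x)|+\delta)$, a quantity that shrinks as $\tau$ is estimated more accurately relative to $\tau^*$. A Zhang-type comparison inequality then transfers excess surrogate risk to excess 0-1 risk.

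\emph{Main obstacle.} The main obstacle is this calibration step. A global $\tau$ cannot match an input-dependent $\tau^*(x)$ in general, so the "vanishing" error must be stated relative to the mismatch $\mathbb{E}|g^{-1}(\tau)-\tau^*(x)|$. The honest guarantee is an explicit bound in terms of this mismatch plus the percentile estimation error $O_p(n^{-1/2})$, rather than exact recovery.
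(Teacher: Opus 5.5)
\textbf{Consistency and bias.} For these two items you follow the paper's route: argmin consistency, then a third-order expansion of $\nabla L_n(\hat\theta_n)=0$. Your version is the sounder one.
\begin{enumerate}
\item The paper assumes only local uniqueness of $\theta^\star$ and no envelope, which does not by itself justify its appeal to uniform convergence. Your compactness, global identifiability and integrability conditions do.
\item You also handle the plug-in $\hat\tau_n$ and the uniform integrability needed to pass from $O_p$ remainders to expectations. The paper skips both.
\item More substantively, the paper's Step~3 replaces $\nabla^2 L_n(\theta^\star)$ by $H$. It thereby discards your term $\mathbb{E}[H^{-1}(H_n-H)H^{-1}g_n]=\frac{1}{n}H^{-1}\mathbb{E}[(\nabla^2\ell-H)H^{-1}\nabla\ell]$, which is $O(1/n)$ and generally nonzero.
\end{enumerate}
So the paper's $B_1=-\tfrac12 H^{-1}J(H^{-1}SH^{-1})$ is incomplete, and your two-term constant is the correct Cox--Snell-type one. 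A check: take $\ell(\mu;z)=\log\mu+z/\mu$ with $z$ exponential of mean $\mu_0$. Then $\hat\mu_n=\bar z$ is exactly unbiased, the paper's $B_1$ equals $2\mu_0$, and your cross term contributes the compensating $-2\mu_0$. For the informal claim (an $O(1/n)$ leading term built from curvature, variance and third-order quantities) either version suffices.

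\textbf{Calibration.} Here your route genuinely differs. The paper only bounds the label disagreement, $\Pr[l\neq l^*]=O(\varepsilon_\tau+\|\xi\|_{\psi_2})$, by an argument that tacitly identifies the population $p$-quantile of $s$ with $g(\tau^*(x))$. You pass through weighted log-loss calibration and a Zhang-type comparison to reach the trained classifier, and you decline that identification.

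You are right to decline it: the mismatch term cannot be dropped, and not only because $\tau^*$ depends on $x$. Take a single prompt, $g$ the identity, $\xi=0$, and $\mathcal{R}\sim\mathcal{N}(m,v)$ under $\pi_{\mathrm{ref}}$. Then $\tau^*=m+v/(2\beta)$ while the median threshold is $m$, so even with the exact population median $\Pr[l\neq l^*]=\Phi(\sqrt{v}/(2\beta))-\tfrac12>0$. Item~3 therefore holds only in your mismatch-dependent form, or under an explicit assumption $g^{-1}(\tau)=\tau^*(x)$ that the paper never states.

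One caveat on the extra step you add. Minimizing over a free value $u=\hat s(x,y)$ at each point ignores that a normalized $\pi_\theta$ forces $\mathbb{E}_{y\sim\pi_{\mathrm{ref}}(\cdot|x)}[e^{\hat s(x,y)/\beta}]=1$ for every $x$. The constrained minimizer then carries a per-$x$ Lagrange multiplier, which is precisely an instance-dependent offset. For example, suppose $y$ is drawn from $\pi_{\mathrm{ref}}(\cdot|x)$ and $\eta_w(x,\cdot)>\tfrac12$ almost everywhere, as a global threshold produces for uniformly high-reward prompts. Then $\hat s(x,\cdot)>0$ almost everywhere is incompatible with the constraint. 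So $\mathrm{sign}(u^\star)=\mathrm{sign}(\eta_w-\tfrac12)$ requires either an unconstrained score class or an explicit treatment of this constraint.
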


These results show that the threshold-guided surrogate is statistically well-behaved: the empirical minimizer is consistent for the population optimum, and the deviation from the ideal KL rule is controlled by the quality of the score and threshold estimates. 
We emphasize that the guarantees are stated with respect to the surrogate objective in Eq.~\eqref{eq:upo_loss}, which is designed to approximate the original KL-regularized problem while remaining tractable.

\subsection{Implementation for Visual Generative Models}
\label{sec:upo_impl}

\subsubsection{Log-Likelihood Approximation}

The implicit policy score $\hat{s}_{\theta,\mathrm{ref}}$ in Eq.~\eqref{eq:upo_loss} depends on log-likelihoods 
$\log \pi_\theta(y|x)$ and $\log \pi_{\mathrm{ref}}(y|x)$, 
whose computation is model-dependent.

\paragraph{Diffusion models (continuous outputs).}
For diffusion-based generative models, exact likelihoods are generally intractable. 
Following prior work on diffusion preference optimization~\citep{lee2023aligning,wallace2024diffusion}, we approximate the likelihood under a Gaussian observation model. 
If $\hat{y}_\theta(x)$ is the denoised prediction, we write
\begin{equation}
\label{eq:gaussian_likelihood}
\begin{aligned}
p(y|x) &= \mathcal{N}\big(\hat{y}_\theta(x), \sigma^2 I\big)
\\[4pt]
&\Rightarrow\;
\log p(y|x)
= -\frac{1}{2\sigma^2}\big\|y - \hat{y}_\theta(x)\big\|^2 + \text{const}.
\end{aligned}
\end{equation}
and thus use a scaled negative MSE as a surrogate:
\[
\log \pi_\theta(y|x) \approx -\frac{1}{T} \,\mathrm{MSE}\big(y,\hat{y}_\theta(x)\big),
\]
with a temperature hyperparameter $T$ controlling the scale.

\paragraph{MaskGIT (discrete outputs).}
For MaskGIT~\citep{chang2022maskgit}-style (masked generative transformers) models, an image $y$ is encoded as tokens $(t_1,\dots,t_N)$ via a VQ-GAN~\citep{esser2021taming}. 
The model predicts masked tokens given visible context and condition $x$, so the log-likelihood is directly available as
\[
\log \pi_\theta(y|x) 
= \frac{1}{|M|} \sum_{i \in M} \log p_\theta(t_i \mid y_{\setminus M}, x),
\]
where $M$ is the set of masked positions and $y_{\setminus M}$ denotes unmasked tokens.

\begin{algorithm}[t]
\caption{Offline Threshold-Guided Optimization from Scalar Feedback}
\label{alg:upo}
\begin{algorithmic}[1]
\small
\REQUIRE Initial policy $\pi_\theta$, reward model $r(\cdot)$, dataset $\mathcal{D}=\{(x_i,y_i)\}$,
batch size $B$, percentile $p$, scale $c$, temperature $\beta$
\ENSURE Updated policy $\pi_\theta$

\STATE $\pi_{\mathrm{ref}} \leftarrow \pi_\theta$
\STATE Compute $s_i \leftarrow r(x_i,y_i)$ for all $(x_i,y_i)\in\mathcal{D}$
\STATE $\tau \leftarrow \mathrm{Percentile}(\{s_i\}, p)$

\FOR{each epoch}
  \FOR{each minibatch $\{(x_j,y_j,s_j)\}_{j=1}^{B} \sim \mathcal{D}$}
    \FOR{$j=1,\dots,B$}
      \STATE $l_j \leftarrow \mathbb{1}[s_j \ge \tau]$
      \STATE $w_j \leftarrow 1 + c\cdot |s_j-\tau|$
      \STATE $\hat r_j \leftarrow \beta\big(\log\pi_\theta(y_j|x_j)-\log\pi_{\mathrm{ref}}(y_j|x_j)\big)$
      \STATE $\ell_j \leftarrow -w_j\Big(l_j\log\sigma(\hat r_j) + (1-l_j)\log(1-\sigma(\hat r_j))\Big)$
    \ENDFOR
    \STATE $\mathcal{L} \leftarrow \frac{1}{B}\sum_{j=1}^{B}\ell_j$
    \STATE Update $\pi_\theta$ by a gradient step on $\nabla_\theta \mathcal{L}$
  \ENDFOR
  \STATE (Optional) $\pi_{\mathrm{ref}} \leftarrow \pi_\theta$
\ENDFOR
\end{algorithmic}
\end{algorithm}

\subsubsection{Training Procedure}

In our experiments, we adopt an offline training setup, 
where a fixed dataset $\{(x_i,y_i)\}$ is first generated and scored. 
Reward scores $s_i = r(x_i,y_i)$ are precomputed, and a global threshold $\tau$ is estimated from their empirical distribution (or from a proxy set generated by $\pi_{\mathrm{ref}}$). 
During training, each mini-batch receives pseudo-labels $l$ and confidence weights $w(s,\tau)$, and the model parameters are updated by minimizing $\mathcal{L}_{\mathrm{TG}}$. Algorithm~\ref{alg:upo} summarizes the threshold-guided optimization procedure. 

In large-scale settings where pre-scored datasets may come from a distribution different from the current policy, we estimate $\tau$ using smaller proxy sets generated by the reference policy and scored by the reward model, and then reuse the resulting threshold on the large dataset. 
As the proxy sets grow, the estimation error in $\tau$ shrinks, and our theoretical analysis (Theorem~\ref{thm:upo-guarantees}) shows that the induced bias in the surrogate objective decays at rate $O(1/n)$.

\begin{table*}[t]
\centering
% \vspace{-5pt}
\caption{Mean reward-model scores of different post-training methods applied to SD v1.5 on three text-to-image benchmarks. Higher is better. The best result in each column is in \textbf{bold}, and the second best is \underline{underlined}.}
\label{tab:t2i-general}
\footnotesize
\setlength{\tabcolsep}{3.5pt}
\renewcommand{\arraystretch}{0.86}

\begin{tabular*}{\textwidth}{@{\extracolsep{\fill}}c l c c c c c @{}}
\toprule
\textbf{Dataset} & \textbf{Method} & \textbf{PickScore} & \textbf{HPSv2.1} & \textbf{CLIPScore} & \textbf{ImageReward} & \textbf{Aesthetic} \\
\midrule

\multirow{8}{*}{Pick-a-Pic} 
& SD v1.5        & 20.35 & 0.2469 & 26.84 & 0.1131 & 5.33 \\
& SFT            & 20.68 & 0.2724 & 27.04 & 0.5015 & \underline{5.53} \\
& CSFT           & 20.48 & 0.2649 & 26.83 & 0.3473 & 5.49 \\
& AlignProp      & 20.35 & 0.2469 & 26.84 & 0.1131 & 5.33 \\
& Diffusion-DPO  & 20.78 & 0.2594 & 27.45 & 0.3433 & 5.38 \\
& Diffusion-KTO  & \underline{20.94} & \underline{0.2814} & \underline{27.48} & \underline{0.6381} & 5.51 \\
& DSPO           & 20.35 & 0.2469 & 26.84 & 0.1131 & 5.33 \\
& TGO (ours)     & \textbf{21.05} & \textbf{0.2860} & \textbf{27.62} & \textbf{0.6703} & \textbf{5.55} \\
\midrule

\multirow{8}{*}{PartiPrompts} 
& SD v1.5        & 21.15 & 0.2475 & 26.59 & 0.2426 & 5.25 \\
& SFT            & 21.31 & 0.2673 & 26.73 & 0.4383 & \underline{5.48} \\
& CSFT           & 21.20 & 0.2617 & 26.63 & 0.3197 & 5.45 \\
& AlignProp      & 21.15 & 0.2475 & 26.59 & 0.2426 & 5.25 \\
& Diffusion-DPO  & 21.41 & 0.2561 & 27.02 & 0.3547 & 5.32 \\
& Diffusion-KTO  & \underline{21.50} & \underline{0.2757} & \underline{27.06} & \underline{0.6269} & 5.47 \\
& DSPO           & 21.15 & 0.2475 & 26.59 & 0.2426 & 5.25 \\
& TGO (ours)     & \textbf{21.55} & \textbf{0.2790} & \textbf{27.15} & \textbf{0.6574} & \textbf{5.51} \\
\midrule

\multirow{8}{*}{HPSv2} 
& SD v1.5        & 20.71 & 0.2455 & 28.90 & 0.1384 & 5.29 \\
& SFT            & 21.09 & 0.2733 & 28.79 & 0.4744 & \underline{5.51} \\
& CSFT           & 20.89 & 0.2654 & 28.74 & 0.3703 & 5.46 \\
& AlignProp      & 20.64 & 0.2412 & 28.82 & 0.1066 & 5.29 \\
& Diffusion-DPO  & 21.14 & 0.2593 & 29.31 & 0.3672 & 5.39 \\
& Diffusion-KTO  & \underline{21.24} & \underline{0.2873} & \underline{30.02} & \underline{0.7365} & 5.50 \\
& DSPO           & 20.64 & 0.2412 & 28.82 & 0.1066 & 5.29 \\
& TGO (ours)     & \textbf{21.45} & \textbf{0.2961} & \textbf{30.38} & \textbf{0.7595} & \textbf{5.53} \\
\bottomrule
\end{tabular*}

\vspace{-5pt}
\end{table*}

\begin{figure*}[h]
    \centering
    % \vspace{-5pt}
    \includegraphics[width=1.0\linewidth]{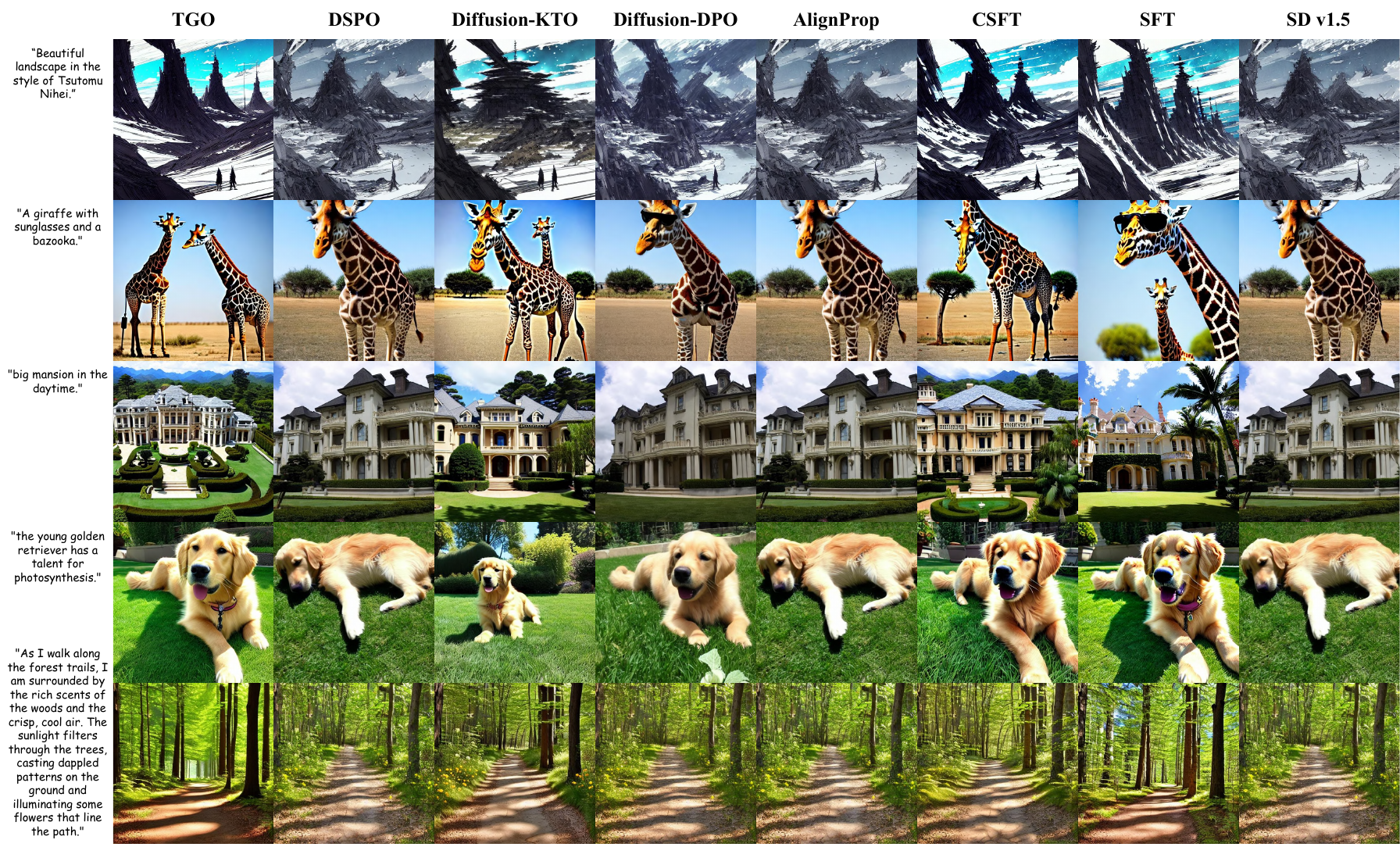}
    
    \caption{Qualitative comparison on Stable Diffusion v1.5. Each column corresponds to one finetuning method (from left to right: TGO (ours), DSPO, Diffusion-KTO, Diffusion-DPO, AlignProp, CSFT, SFT, and the original SD~v1.5). Each row shows images generated from the same text prompt (listed on the left), sampled from the HPS~v2, Pick-a-Pic, and PartiPrompts test sets.}
    \label{fig:qualitative_results}
   % \vspace{-3pt}
\end{figure*}

\section{Experiments}\label{results}

We evaluate Threshold-Guided Optimization (TGO) for visual generation alignment under two training settings: (i) a true scalar-feedback setting where each prompt is associated with a single scalar score (our 10k-prompt collection),
and (ii) a pairwise-to-scalar setting derived from Pick-a-Pic v2, used only for controlled comparison with prior preference-alignment methods that are commonly benchmarked on this dataset.
Across both settings, we report results on standard prompt test sets and multiple reward models to reduce sensitivity to any single scorer.
\vspace{-5pt}
\subsection{Experimental Setup}
\label{sec:exp_setup}

\paragraph{Training data.}
(1) \textbf{Scalar-feedback prompts (10k)}.
We collect and filter 10,000 high-quality text prompts from the Internet.
For each foundation model, we sample images conditioned on these prompts and obtain scalar feedback via reward models.
We use a percentile-based threshold to convert scalar scores into \emph{pseudo-preferred} vs.\ \emph{pseudo-rejected} labels.
(2) \textbf{Pick-a-Pic v2 (pairwise-to-scalar)}.
Pick-a-Pic v2 contains human pairwise preferences.
To follow the common evaluation protocol in diffusion alignment and to enable fair comparison with baselines, we convert the pairwise annotations into per-image scalar scores by aggregating win counts across comparisons\footnote{We explicitly note that \textbf{Diffusion-KTO~\citep{li2024aligning} is also trained under an unpaired-data interface} (desirable vs.\ undesirable sets derived from preferences); we therefore adopt the same pairwise-to-scalar conversion to keep the comparison aligned.}, then apply the same percentile thresholding to obtain pseudo-preferred and pseudo-rejected subsets.
We use this setting only for comparison with methods whose official benchmarks are reported on Pick-a-Pic v2 (Sec.~\ref{sec:comparison_pickapic}).

\paragraph{Models and training.}
We fine-tune three text-to-image foundation models: Stable Diffusion~\citep{rombach2022high}, Meissonic~\citep{bai2024meissonic}, and FLUX~\citep{flux2024}.
Unless otherwise stated, TGO uses $\beta=1$, diffusion log-likelihood temperature $T=0.001$, and confidence scaling $c=5$.
For the scalar-feedback (10k) setting, we train TGO and SFT with identical optimization hyperparameters (batch size 128, 78 update steps, learning rate $1\mathrm{e}{-5}$).
For Pick-a-Pic v2 comparisons, we follow the published or official training protocol of each baseline.

\paragraph{Baselines.}
We compare against: (i) the original pretrained model, (ii) SFT (training on pseudo-preferred samples only), (iii) CSFT~\citep{wu2023human},
and preference-alignment baselines including AlignProp~\citep{prabhudesai2023aligning}, Diffusion-DPO~\citep{wallace2024diffusion}, Diffusion-KTO~\citep{li2024aligning}, and DSPO~\citep{zhu2025dspo}.

\paragraph{Evaluation protocol.}
We evaluate on three standard prompt test sets: Pick-a-Pic (424 prompts)~\citep{kirstain2023pick}, PartiPrompts (1,632 prompts)~\citep{yu2022scaling}, and HPSv2 (3,200 prompts)~\citep{wu2023human}.
We report scores from multiple widely used reward models (HPSv2.1~\citep{wu2023human}, PickScore~\citep{kirstain2023pick}, CLIP~\citep{radford2021learning}, ImageReward~\citep{xu2023imagereward} and LAION Aesthetic Score~\citep{schuhmann2022laion}) to reduce dependence on any single scorer.

\begin{table*}[t]
\centering
% \vspace{-5pt}
\caption{
Quantitative comparison of text-to-image generation across original, supervised fine-tuned (SFT), and threshold-guided optimization (TGO) methods. Higher is better. Evaluation is conducted on HPS Prompts using four different reward models.
}
\label{tab:text2image-mean-median}
\small
\setlength{\tabcolsep}{4.5pt}
\renewcommand{\arraystretch}{0.86}

\begin{tabular*}{\textwidth}{@{\extracolsep{\fill}}l c cc cc cc cc @{}}
\toprule
\textbf{Paradigm} & \textbf{Method} 
& \multicolumn{2}{c}{\textbf{HPSv2.1~($\uparrow$)}} 
& \multicolumn{2}{c}{\textbf{PickScore~($\uparrow$)}} 
& \multicolumn{2}{c}{\textbf{ImageReward~($\uparrow$)}} 
& \multicolumn{2}{c}{\textbf{Aesthetic~($\uparrow$)}} \\
\cmidrule(lr){3-4} \cmidrule(lr){5-6} \cmidrule(lr){7-8} \cmidrule(lr){9-10}
& & Mean & Median & Mean & Median & Mean & Median & Mean & Median \\
\midrule

\multirow{3}{*}{Diffusion} 
& SD v1.4     & 0.2454 & 0.2462 & 20.8040 & 20.7784 & 0.1406 & 0.1773 & 5.4277 & 5.4293 \\
& +SFT         & 0.2506 & 0.2520 & 20.7217 & 20.7006 & 0.2348 & 0.2870 & 5.4927 & 5.4948 \\
& +TGO         & \textbf{0.2618} & \textbf{0.2631} & \textbf{20.9001} & \textbf{20.8907} & \textbf{0.3523} & \textbf{0.4246} & \textbf{5.6036} & \textbf{5.6122} \\
\midrule

\multirow{3}{*}{MaskGIT} 
& Meissonic     & 0.2810 & 0.2837 & 21.8315 & 21.7686 & 0.8230 & 0.9674 & 5.7692 & 5.7578 \\
& +SFT         & 0.2912 & 0.2928 & 21.9105 & 21.8419 & 0.9215 & 1.0985 & 5.8013 & 5.7999 \\
& +TGO         & \textbf{0.2915} & \textbf{0.2934} & \textbf{21.9421} & \textbf{21.8946} & \textbf{0.9369} & \textbf{1.1233} & \textbf{5.8270} & \textbf{5.8234} \\

\bottomrule
\end{tabular*}

\end{table*}

\begin{figure*}[ht]
  \centering
  \includegraphics[width=0.97\linewidth]{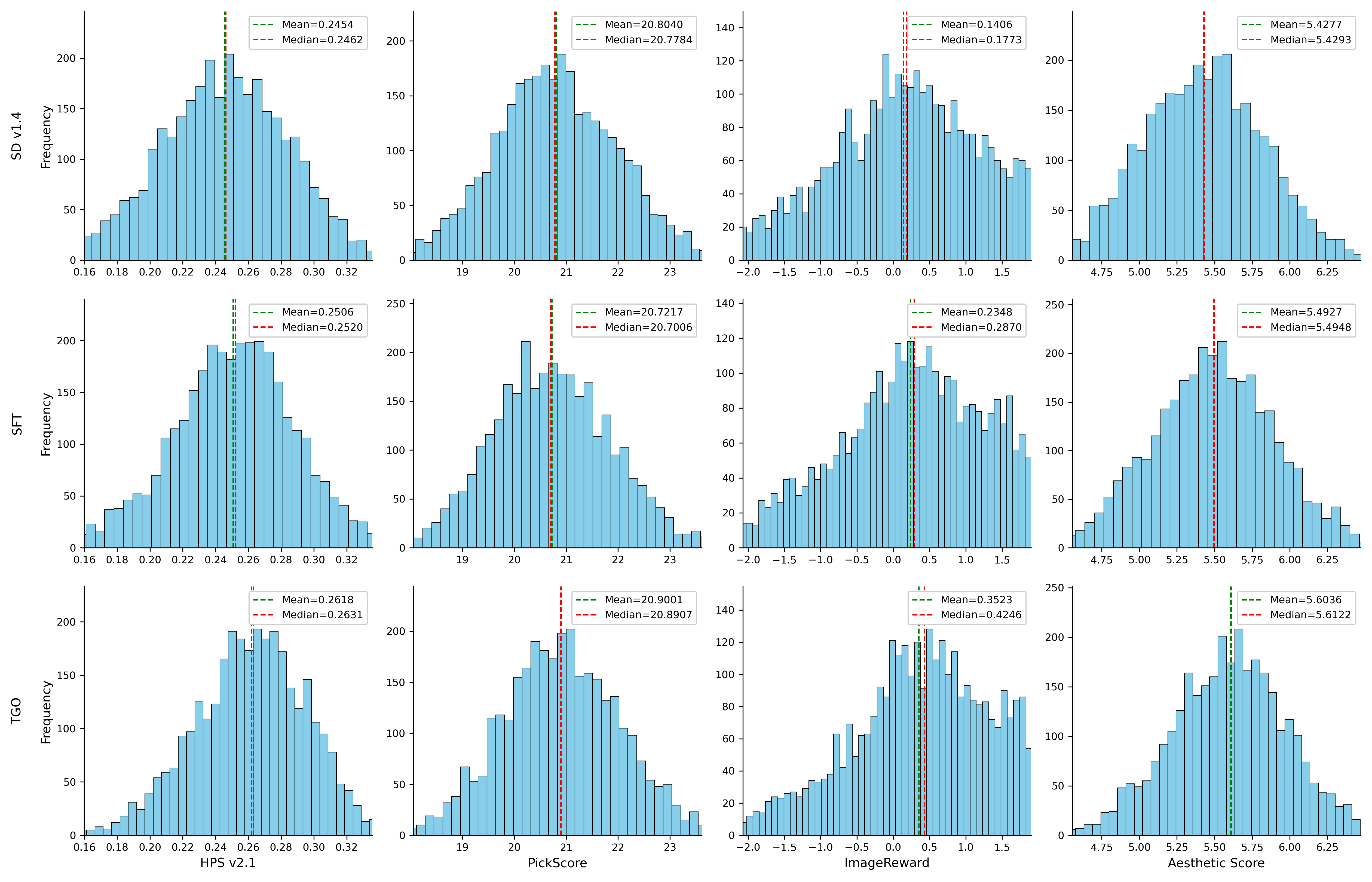}
  \caption{Score distributions by model and reward metric for SD v1.4.}
  \label{fig:sd14_distributions}
  
\end{figure*}

\subsection{Main Results on 10k Scalar-Feedback Prompts}
\label{sec:main_results_scalar}

\paragraph{Quantitative results.}
We first study the true scalar-feedback setting using our 10k-prompt collection.
Tab.~\ref{tab:text2image-mean-median} reports mean and median scores on HPS Prompts under four reward models.
Compared to the original model and SFT, TGO yields consistent improvements across foundation models, with gains reflected not only in mean scores but also in medians, indicating a broad distributional shift rather than improvements driven by a small subset of prompts.

\paragraph{Distributional analysis.}
To further quantify how TGO changes model behavior beyond averages, Fig.~\ref{fig:sd14_distributions} visualizes full score distributions on SD v1.4 across reward metrics.
This view complements table summaries and makes it explicit whether improvements correspond to a global right-shift or only to tail effects.

\subsection{Comparison on Pick-a-Pic v2}
\label{sec:comparison_pickapic}

While Pick-a-Pic v2 provides pairwise human preferences, most diffusion alignment baselines report results on this dataset.
To compare against these methods under their standard setting, we follow prior work\footnote{\textbf{Important note on the ``unpaired'' protocol.}
Diffusion-KTO is trained through an unpaired interface (desirable/undesirable sets) derived from preference annotations.
We therefore adopt the same unpaired conversion so that TGO and Diffusion-KTO operate on comparable supervision signals, avoiding confounds from different preprocessing.} and convert Pick-a-Pic v2 into per-image scalar scores via aggregated win counts, then threshold at percentile $p=0.5$ to obtain pseudo-preferred and pseudo-rejected subsets.
This yields 237,530 pseudo-preferred and 690,538 pseudo-rejected samples for training.

\paragraph{Results.}
Tab.~\ref{tab:t2i-general} reports mean reward-model scores on SD v1.5 across three test sets.
TGO achieves strong performance relative to both supervised baselines and prior preference-alignment methods.
Additional results are provided in App.~\ref{app:quantitative_results}.

\paragraph{Sensitivity to the percentile threshold.}
Because TGO replaces the oracle instance-dependent baseline with an empirical threshold, we study its sensitivity to the percentile choice $p$. Tab.~\ref{tab:threshold_sensitivity} shows that TGO is reasonably stable within the practical range $p\in\{0.3,0.5\}$ across Pick-a-Pic, PartiPrompts, and HPSv2. Very low thresholds include many weak positives, while overly high thresholds produce too few positives and degrade several reward-model scores. We therefore use the median threshold by default, while treating $p$ as a simple validation hyperparameter when a held-out scalar-feedback set is available.

\begin{table*}[t]
\centering
\caption{Sensitivity of TGO to the percentile threshold $p$ on Pick-a-Pic v2 training. Higher is better. Best and second-best results are shown in \textbf{bold} and \underline{underlined}.}
\label{tab:threshold_sensitivity}
\footnotesize
\setlength{\tabcolsep}{2.6pt}

\resizebox{\textwidth}{!}{
\begin{tabular}{lccccc ccccc ccccc}
\toprule
\textbf{$p$} 
& \multicolumn{5}{c}{\textbf{Pick-a-Pic}} 
& \multicolumn{5}{c}{\textbf{PartiPrompts}} 
& \multicolumn{5}{c}{\textbf{HPSv2}} \\
\cmidrule(lr){2-6}
\cmidrule(lr){7-11}
\cmidrule(lr){12-16}
& PickScore & HPSv2.1 & CLIPScore & ImageReward & Aesthetic
& PickScore & HPSv2.1 & CLIPScore & ImageReward & Aesthetic
& PickScore & HPSv2.1 & CLIPScore & ImageReward & Aesthetic \\
\midrule
0.1 
& 20.93 & 0.2757 & 27.44 & 0.4913 & \underline{5.66}
& 21.50 & 0.2751 & 27.14 & 0.5695 & 5.55
& 21.20 & 0.2857 & 29.68 & 0.6720 & 5.59 \\

0.3 
& \underline{21.09} & \underline{0.2841} & 27.62 & \textbf{0.6817} & \textbf{5.67}
& \underline{21.61} & \underline{0.2754} & 27.36 & \underline{0.5859} & \textbf{5.59}
& \underline{21.37} & \textbf{0.2988} & 29.92 & \underline{0.7111} & \textbf{5.63} \\

0.5 
& \textbf{21.13} & \textbf{0.2881} & \underline{27.72} & \underline{0.6728} & 5.63
& \textbf{21.62} & \textbf{0.2774} & \textbf{27.56} & \textbf{0.6485} & \underline{5.58}
& \textbf{21.43} & \underline{0.2935} & \underline{30.17} & \textbf{0.7514} & \underline{5.61} \\

0.7 
& 20.97 & 0.2690 & 27.58 & 0.3825 & 5.58
& 21.46 & 0.2616 & 27.33 & 0.4338 & 5.51
& 21.19 & 0.2797 & 30.01 & 0.5171 & 5.59 \\

0.9 
& 20.89 & 0.2660 & \textbf{27.82} & 0.3385 & 5.53
& 21.45 & 0.2645 & \underline{27.39} & 0.4643 & 5.43
& 21.26 & 0.2756 & \textbf{30.23} & 0.5308 & 5.53 \\
\bottomrule
\end{tabular}
}

\end{table*}

\subsection{Qualitative Results}
\label{sec:qualitative}

Fig.~\ref{fig:qualitative_results} provides side-by-side comparisons on SD v1.5.
Across prompts sampled from HPSv2, Pick-a-Pic, and PartiPrompts, TGO better preserves prompt-relevant attributes and reduces obvious artifacts compared with SFT/CSFT and several preference-based baselines.
Additional qualitative examples are deferred to App.~\ref{app:qualitative_results}.

\subsection{Text-to-Video Generalization}
\label{sec:t2v}

We further evaluate TGO on text-to-video generation. Starting from VidProM, we construct a 15{,}218-prompt subset\footnote{Specifically, we first select prompts with length between 100 and 200 from the original VidProM dataset, and then use Qwen to filter out prompts that are unnatural, incomplete, or grammatically incorrect. This yields 15{,}218 high-quality prompts.} and split it into training and test sets with an 8:2 ratio. For all experiments, we randomly subsample the training split for computational efficiency.

As the backbone, we adopt Wan 1.3B~\citep{wan2025wan} and use VideoReward~\citep{liu2025improving} as the reward model. Tab.~\ref{tab:t2v} reports VideoAlign metrics. Compared with the supervised fine-tuning baseline (SFT-LoRA) and the preference-optimization baseline (KTO-LoRA), TGO-LoRA improves the overall VideoReward score and yields better VQ and MQ while maintaining competitive temporal alignment, suggesting that the threshold-guided scheme naturally extends from images to video generation.

\begin{table}[t]
\centering
\caption{Text-to-video results on the VideoAlign benchmark with Wan 1.3B and VideoReward. TGO-LoRA improves the overall VideoReward score and most component metrics compared with SFT-LoRA and KTO-LoRA.}

\label{tab:t2v}
\small
\setlength{\tabcolsep}{6pt}
\resizebox{\linewidth}{!}{%
\begin{tabular}{lcccc}
\toprule
\textbf{Method} & \textbf{VQ Score} & \textbf{MQ Score} & \textbf{TA Score} & \textbf{Overall Score} \\
\midrule
Original   & -0.7963 & -0.4316 & -0.8639 & -2.0918 \\
SFT-LoRA   & -0.6054 & -0.4159 & \textbf{-0.6705} & -1.6918 \\
KTO-LoRA   & -0.5832 & -0.1025 & -0.6846 & -1.3703 \\
TGO-LoRA   & \textbf{-0.5631} & \textbf{0.0627} & -0.6753 & \textbf{-1.1757} \\
\bottomrule
\end{tabular}%
}
\vspace{-5pt}
\end{table}

\subsection{Ablations}
\label{sec:ablation}

We ablate key hyperparameters of the threshold-guided loss, including the diffusion temperature $T$, confidence scaling $c$, preference strength $\beta$.
All ablations are conducted on SD v1.5 trained on Pick-a-Pic v2 in App.~\ref{app:ablation}.

\section{Limitations}
Our method has several limitations. First, the current formulation uses a single global threshold estimated from empirical score statistics. Although this works well in our experiments, it may be suboptimal when score distributions are strongly heterogeneous across prompts or semantic groups, in which case prompt-conditional thresholds could be more appropriate. Second, TGO does not eliminate the dependence on feedback quality: if the scalar scores are noisy, biased, or only imperfectly aligned with human preference, the induced pseudo-labels can inherit these imperfections.

\vspace{-10pt}
\section{Conclusion}

In this work, we introduced a threshold-guided optimization framework for aligning visual generative models directly from scalar feedback, without requiring explicitly paired preferences. 
By revisiting the KL-regularized objective, we interpret alignment as comparing each sample’s reward to an intractable instance-specific baseline, and replace this oracle with a global, data-driven threshold combined with a confidence-weighted surrogate loss.
Experiments across diffusion and masked generative models, multiple datasets, and diverse reward models show consistent improvements over supervised fine-tuning and recent preference-based baselines. 
Our work indicates that scalar scores are sufficient to recover much of the benefit of pairwise preference optimization in practical visual generation settings.

\clearpage

\section*{Impact Statement}

This paper studies preference optimization for visual generative models using scalar feedback via a threshold-guided objective. The primary intended impact is to reduce the dependence on explicitly paired comparisons, which can be costly to collect and difficult to scale, thereby making alignment pipelines for image/video generation more accessible and efficient.

\bibliography{main}

@article{wu2023human,
  title={Human preference score v2: A solid benchmark for evaluating human preferences of text-to-image synthesis},
  author={Wu, Xiaoshi and Hao, Yiming and Sun, Keqiang and Chen, Yixiong and Zhu, Feng and Zhao, Rui and Li, Hongsheng},
  journal={arXiv preprint arXiv:2306.09341},
  year={2023}
}

@article{rafailov2023direct,
  title={Direct preference optimization: Your language model is secretly a reward model},
  author={Rafailov, Rafael and Sharma, Archit and Mitchell, Eric and Manning, Christopher D and Ermon, Stefano and Finn, Chelsea},
  journal={Advances in neural information processing systems},
  volume={36},
  pages={53728--53741},
  year={2023}
}

@article{kirstain2023pick,
  title={Pick-a-pic: An open dataset of user preferences for text-to-image generation},
  author={Kirstain, Yuval and Polyak, Adam and Singer, Uriel and Matiana, Shahbuland and Penna, Joe and Levy, Omer},
  journal={Advances in neural information processing systems},
  volume={36},
  pages={36652--36663},
  year={2023}
}

@article{xu2023imagereward,
  title={Imagereward: Learning and evaluating human preferences for text-to-image generation},
  author={Xu, Jiazheng and Liu, Xiao and Wu, Yuchen and Tong, Yuxuan and Li, Qinkai and Ding, Ming and Tang, Jie and Dong, Yuxiao},
  journal={Advances in Neural Information Processing Systems},
  volume={36},
  pages={15903--15935},
  year={2023}
}

@article{schuhmann2022laion,
  title={Laion-5b: An open large-scale dataset for training next generation image-text models},
  author={Schuhmann, Christoph and Beaumont, Romain and Vencu, Richard and Gordon, Cade and Wightman, Ross and Cherti, Mehdi and Coombes, Theo and Katta, Aarush and Mullis, Clayton and Wortsman, Mitchell and others},
  journal={Advances in neural information processing systems},
  volume={35},
  pages={25278--25294},
  year={2022}
}

@inproceedings{rombach2022high,
  title={High-resolution image synthesis with latent diffusion models},
  author={Rombach, Robin and Blattmann, Andreas and Lorenz, Dominik and Esser, Patrick and Ommer, Bj{\"o}rn},
  booktitle={Proceedings of the IEEE/CVF conference on computer vision and pattern recognition},
  pages={10684--10695},
  year={2022}
}

@misc{flux2024,
    author={Black-Forest-Labs},
    title={FLUX},
    year={2024},
    howpublished={\url{https://github.com/black-forest-labs/flux}},
}

@inproceedings{liu2025lipo,
  title={Lipo: Listwise preference optimization through learning-to-rank},
  author={Liu, Tianqi and Qin, Zhen and Wu, Junru and Shen, Jiaming and Khalman, Misha and Joshi, Rishabh and Zhao, Yao and Saleh, Mohammad and Baumgartner, Simon and Liu, Jialu and others},
  booktitle={Proceedings of the 2025 Conference of the Nations of the Americas Chapter of the Association for Computational Linguistics: Human Language Technologies (Volume 1: Long Papers)},
  pages={2404--2420},
  year={2025}
}

@article{goodfellow2020generative,
  title={Generative adversarial networks},
  author={Goodfellow, Ian and Pouget-Abadie, Jean and Mirza, Mehdi and Xu, Bing and Warde-Farley, David and Ozair, Sherjil and Courville, Aaron and Bengio, Yoshua},
  journal={Communications of the ACM},
  volume={63},
  number={11},
  pages={139--144},
  year={2020},
  publisher={ACM New York, NY, USA}
}

@misc{kingma2013auto,
  title={Auto-encoding variational bayes},
  author={Kingma, Diederik P and Welling, Max and others},
  year={2013},
  publisher={Banff, Canada}
}

@inproceedings{sohl2015deep,
  title={Deep unsupervised learning using nonequilibrium thermodynamics},
  author={Sohl-Dickstein, Jascha and Weiss, Eric and Maheswaranathan, Niru and Ganguli, Surya},
  booktitle={International conference on machine learning},
  pages={2256--2265},
  year={2015},
  organization={pmlr}
}

@article{ho2020denoising,
  title={Denoising diffusion probabilistic models},
  author={Ho, Jonathan and Jain, Ajay and Abbeel, Pieter},
  journal={Advances in neural information processing systems},
  volume={33},
  pages={6840--6851},
  year={2020}
}

@article{ho2022classifier,
  title={Classifier-free diffusion guidance},
  author={Ho, Jonathan and Salimans, Tim},
  journal={arXiv preprint arXiv:2207.12598},
  year={2022}
}

@article{christiano2017deep,
  title={Deep reinforcement learning from human preferences},
  author={Christiano, Paul F and Leike, Jan and Brown, Tom and Martic, Miljan and Legg, Shane and Amodei, Dario},
  journal={Advances in neural information processing systems},
  volume={30},
  year={2017}
}

@article{ouyang2022training,
  title={Training language models to follow instructions with human feedback},
  author={Ouyang, Long and Wu, Jeffrey and Jiang, Xu and Almeida, Diogo and Wainwright, Carroll and Mishkin, Pamela and Zhang, Chong and Agarwal, Sandhini and Slama, Katarina and Ray, Alex and others},
  journal={Advances in neural information processing systems},
  volume={35},
  pages={27730--27744},
  year={2022}
}

@article{black2023training,
  title={Training diffusion models with reinforcement learning},
  author={Black, Kevin and Janner, Michael and Du, Yilun and Kostrikov, Ilya and Levine, Sergey},
  journal={arXiv preprint arXiv:2305.13301},
  year={2023}
}

@inproceedings{wallace2024diffusion,
  title={Diffusion model alignment using direct preference optimization},
  author={Wallace, Bram and Dang, Meihua and Rafailov, Rafael and Zhou, Linqi and Lou, Aaron and Purushwalkam, Senthil and Ermon, Stefano and Xiong, Caiming and Joty, Shafiq and Naik, Nikhil},
  booktitle={Proceedings of the IEEE/CVF Conference on Computer Vision and Pattern Recognition},
  pages={8228--8238},
  year={2024}
}

@article{matrenok2025quantile,
  title={Quantile Reward Policy Optimization: Alignment with Pointwise Regression and Exact Partition Functions},
  author={Matrenok, Simon and Moalla, Skander and Gulcehre, Caglar},
  journal={arXiv preprint arXiv:2507.08068},
  year={2025}
}

@article{abdolmaleki2024preference,
  title={Preference optimization as probabilistic inference},
  author={Abdolmaleki, Abbas and Piot, Bilal and Shahriari, Bobak and Springenberg, Jost Tobias and Hertweck, Tim and Joshi, Rishabh and Oh, Junhyuk and Bloesch, Michael and Lampe, Thomas and Heess, Nicolas and others},
  journal={arXiv e-prints},
  pages={arXiv--2410},
  year={2024}
}

@article{achiam2023gpt,
  title={Gpt-4 technical report},
  author={Achiam, Josh and Adler, Steven and Agarwal, Sandhini and Ahmad, Lama and Akkaya, Ilge and Aleman, Florencia Leoni and Almeida, Diogo and Altenschmidt, Janko and Altman, Sam and Anadkat, Shyamal and others},
  journal={arXiv preprint arXiv:2303.08774},
  year={2023}
}

@inproceedings{peters2007reinforcement,
  title={Reinforcement learning by reward-weighted regression for operational space control},
  author={Peters, Jan and Schaal, Stefan},
  booktitle={Proceedings of the 24th international conference on Machine learning},
  pages={745--750},
  year={2007}
}

@article{peng2019advantage,
  title={Advantage-weighted regression: Simple and scalable off-policy reinforcement learning},
  author={Peng, Xue Bin and Kumar, Aviral and Zhang, Grace and Levine, Sergey},
  journal={arXiv preprint arXiv:1910.00177},
  year={2019}
}

@article{go2023aligning,
  title={Aligning language models with preferences through f-divergence minimization},
  author={Go, Dongyoung and Korbak, Tomasz and Kruszewski, Germ{\'a}n and Rozen, Jos and Ryu, Nahyeon and Dymetman, Marc},
  journal={arXiv preprint arXiv:2302.08215},
  year={2023}
}

@article{bradley1952rank,
  title={Rank analysis of incomplete block designs: I. the method of paired comparisons},
  author={Bradley, Ralph Allan and Terry, Milton E},
  journal={Biometrika},
  volume={39},
  number={3/4},
  pages={324--345},
  year={1952},
  publisher={JSTOR}
}

@article{meng2024simpo,
  title={Simpo: Simple preference optimization with a reference-free reward},
  author={Meng, Yu and Xia, Mengzhou and Chen, Danqi},
  journal={Advances in Neural Information Processing Systems},
  volume={37},
  pages={124198--124235},
  year={2024}
}

@article{ethayarajh2024kto,
  title={Kto: Model alignment as prospect theoretic optimization},
  author={Ethayarajh, Kawin and Xu, Winnie and Muennighoff, Niklas and Jurafsky, Dan and Kiela, Douwe},
  journal={arXiv preprint arXiv:2402.01306},
  year={2024}
}

@article{stiennon2020learning,
  title={Learning to summarize with human feedback},
  author={Stiennon, Nisan and Ouyang, Long and Wu, Jeffrey and Ziegler, Daniel and Lowe, Ryan and Voss, Chelsea and Radford, Alec and Amodei, Dario and Christiano, Paul F},
  journal={Advances in neural information processing systems},
  volume={33},
  pages={3008--3021},
  year={2020}
}

@article{liu2024lipo,
  title={Lipo: Listwise preference optimization through learning-to-rank},
  author={Liu, Tianqi and Qin, Zhen and Wu, Junru and Shen, Jiaming and Khalman, Misha and Joshi, Rishabh and Zhao, Yao and Saleh, Mohammad and Baumgartner, Simon and Liu, Jialu and others},
  journal={arXiv preprint arXiv:2402.01878},
  year={2024}
}

@article{bai2024meissonic,
  title={Meissonic: Revitalizing Masked Generative Transformers for Efficient High-Resolution Text-to-Image Synthesis},
  author={Bai, Jinbin and Ye, Tian and Chow, Wei and Song, Enxin and Li, Xiangtai and Dong, Zhen and Zhu, Lei and Yan, Shuicheng},
  journal={arXiv preprint arXiv:2410.08261},
  year={2024}
}

@article{wan2025wan,
  title={Wan: Open and advanced large-scale video generative models},
  author={Wan, Team and Wang, Ang and Ai, Baole and Wen, Bin and Mao, Chaojie and Xie, Chen-Wei and Chen, Di and Yu, Feiwu and Zhao, Haiming and Yang, Jianxiao and others},
  journal={arXiv preprint arXiv:2503.20314},
  year={2025}
}

@inproceedings{chang2022maskgit,
  title={Maskgit: Masked generative image transformer},
  author={Chang, Huiwen and Zhang, Han and Jiang, Lu and Liu, Ce and Freeman, William T},
  booktitle={Proceedings of the IEEE/CVF conference on computer vision and pattern recognition},
  pages={11315--11325},
  year={2022}
}

@misc{ziebart2010modeling,
  title={Modeling interaction via the principle of maximum causal entropy},
  author={Ziebart, Brian D and Bagnell, J Andrew and Dey, Anind K},
  year={2010},
  publisher={Carnegie Mellon University}
}

@article{liu2025improving,
  title={Improving video generation with human feedback},
  author={Liu, Jie and Liu, Gongye and Liang, Jiajun and Yuan, Ziyang and Liu, Xiaokun and Zheng, Mingwu and Wu, Xiele and Wang, Qiulin and Qin, Wenyu and Xia, Menghan and others},
  journal={arXiv preprint arXiv:2501.13918},
  year={2025}
}

@article{prabhudesai2023aligning,
  title={Aligning text-to-image diffusion models with reward backpropagation},
  author={Prabhudesai, Mihir and Goyal, Anirudh and Pathak, Deepak and Fragkiadaki, Katerina},
  year={2023}
}

@inproceedings{radford2021learning,
  title={Learning transferable visual models from natural language supervision},
  author={Radford, Alec and Kim, Jong Wook and Hallacy, Chris and Ramesh, Aditya and Goh, Gabriel and Agarwal, Sandhini and Sastry, Girish and Askell, Amanda and Mishkin, Pamela and Clark, Jack and others},
  booktitle={International conference on machine learning},
  pages={8748--8763},
  year={2021},
  organization={PmLR}
}

@inproceedings{zhu2025dspo,
  title={DSPO: Direct score preference optimization for diffusion model alignment},
  author={Zhu, Huaisheng and Xiao, Teng and Honavar, Vasant G},
  booktitle={The Thirteenth International Conference on Learning Representations},
  year={2025}
}

@article{lee2023aligning,
  title={Aligning text-to-image models using human feedback},
  author={Lee, Kimin and Liu, Hao and Ryu, Moonkyung and Watkins, Olivia and Du, Yuqing and Boutilier, Craig and Abbeel, Pieter and Ghavamzadeh, Mohammad and Gu, Shixiang Shane},
  journal={arXiv preprint arXiv:2302.12192},
  year={2023}
}

@article{yang2024dense,
  title={A dense reward view on aligning text-to-image diffusion with preference},
  author={Yang, Shentao and Chen, Tianqi and Zhou, Mingyuan},
  journal={arXiv preprint arXiv:2402.08265},
  year={2024}
}

@article{jaques2019way,
  title={Way off-policy batch deep reinforcement learning of implicit human preferences in dialog},
  author={Jaques, Natasha and Ghandeharioun, Asma and Shen, Judy Hanwen and Ferguson, Craig and Lapedriza, Agata and Jones, Noah and Gu, Shixiang and Picard, Rosalind},
  journal={arXiv preprint arXiv:1907.00456},
  year={2019}
}

@inproceedings{esser2021taming,
  title={Taming transformers for high-resolution image synthesis},
  author={Esser, Patrick and Rombach, Robin and Ommer, Bjorn},
  booktitle={Proceedings of the IEEE/CVF conference on computer vision and pattern recognition},
  pages={12873--12883},
  year={2021}
}

@article{schulman2017proximal,
  title={Proximal policy optimization algorithms},
  author={Schulman, John and Wolski, Filip and Dhariwal, Prafulla and Radford, Alec and Klimov, Oleg},
  journal={arXiv preprint arXiv:1707.06347},
  year={2017}
}

@inproceedings{deng2024prdp,
  title={Prdp: Proximal reward difference prediction for large-scale reward finetuning of diffusion models},
  author={Deng, Fei and Wang, Qifei and Wei, Wei and Hou, Tingbo and Grundmann, Matthias},
  booktitle={Proceedings of the IEEE/CVF Conference on Computer Vision and Pattern Recognition},
  pages={7423--7433},
  year={2024}
}

@inproceedings{yang2024using,
  title={Using human feedback to fine-tune diffusion models without any reward model},
  author={Yang, Kai and Tao, Jian and Lyu, Jiafei and Ge, Chunjiang and Chen, Jiaxin and Shen, Weihan and Zhu, Xiaolong and Li, Xiu},
  booktitle={Proceedings of the IEEE/CVF Conference on Computer Vision and Pattern Recognition},
  pages={8941--8951},
  year={2024}
}

@article{ren2025refining,
  title={Refining alignment framework for diffusion models with intermediate-step preference ranking},
  author={Ren, Jie and Zhang, Yuhang and Liu, Dongrui and Zhang, Xiaopeng and Tian, Qi},
  journal={arXiv preprint arXiv:2502.01667},
  year={2025}
}

@article{yu2022scaling,
  title={Scaling autoregressive models for content-rich text-to-image generation},
  author={Yu, Jiahui and Xu, Yuanzhong and Koh, Jing Yu and Luong, Thang and Baid, Gunjan and Wang, Zirui and Vasudevan, Vijay and Ku, Alexander and Yang, Yinfei and Ayan, Burcu Karagol and others},
  journal={arXiv preprint arXiv:2206.10789},
  volume={2},
  number={3},
  pages={5},
  year={2022}
}

@article{2023SDXL,
  title={Sdxl: Improving latent diffusion models for high-resolution image synthesis},
  author={Podell, Dustin and English, Zion and Lacey, Kyle and Blattmann, Andreas and Dockhorn, Tim and M{\"u}ller, Jonas and Penna, Joe and Rombach, Robin},
  journal={arXiv preprint arXiv:2307.01952},
  year={2023}
}

@article{li2024aligning,
  title={Aligning diffusion models by optimizing human utility},
  author={Li, Shufan and Kallidromitis, Konstantinos and Gokul, Akash and Kato, Yusuke and Kozuka, Kazuki},
  journal={Advances in Neural Information Processing Systems},
  volume={37},
  pages={24897--24925},
  year={2024}
}

@article{yang2025ipo,
  title={IPO: Iterative preference optimization for text-to-video generation},
  author={Yang, Xiaomeng and Tan, Zhiyu and Li, Hao},
  journal={arXiv preprint arXiv:2502.02088},
  year={2025}
}

@article{zhang2024onlinevpo,
  title={Onlinevpo: Align video diffusion model with online video-centric preference optimization},
  author={Zhang, Jiacheng and Wu, Jie and Chen, Weifeng and Ji, Yatai and Xiao, Xuefeng and Huang, Weilin and Han, Kai},
  journal={arXiv preprint arXiv:2412.15159},
  year={2024}
}

@article{xue2025dancegrpo,
  title={DanceGRPO: Unleashing GRPO on Visual Generation},
  author={Xue, Zeyue and Wu, Jie and Gao, Yu and Kong, Fangyuan and Zhu, Lingting and Chen, Mengzhao and Liu, Zhiheng and Liu, Wei and Guo, Qiushan and Huang, Weilin and others},
  journal={arXiv preprint arXiv:2505.07818},
  year={2025}
}

@article{liu2025flow,
  title={Flow-grpo: Training flow matching models via online rl},
  author={Liu, Jie and Liu, Gongye and Liang, Jiajun and Li, Yangguang and Liu, Jiaheng and Wang, Xintao and Wan, Pengfei and Zhang, Di and Ouyang, Wanli},
  journal={arXiv preprint arXiv:2505.05470},
  year={2025}
}

@article{dalle3,
  title={Improving image generation with better captions},
  author={Betker, James and Goh, Gabriel and Jing, Li and Brooks, Tim and Wang, Jianfeng and Li, Linjie and Ouyang, Long and Zhuang, Juntang and Lee, Joyce and Guo, Yufei and others},
  journal={Computer Science. https://cdn. openai. com/papers/dall-e-3. pdf},
  volume={2},
  number={3},
  pages={8},
  year={2023}
}

@inproceedings{karthik2025scalable,
  title={Scalable ranked preference optimization for text-to-image generation},
  author={Karthik, Shyamgopal and Coskun, Huseyin and Akata, Zeynep and Tulyakov, Sergey and Ren, Jian and Kag, Anil},
  booktitle={Proceedings of the IEEE/CVF International Conference on Computer Vision},
  pages={18399--18410},
  year={2025}
}
\bibliographystyle{icml2026}

%%%%%%%%%%%%%%%%%%%%%%%%%%%%%%%%%%%%%%%%%%%%%%%%%%%%%%%%%%%%%%%%%%%%%%%%%%%%%%%
%%%%%%%%%%%%%%%%%%%%%%%%%%%%%%%%%%%%%%%%%%%%%%%%%%%%%%%%%%%%%%%%%%%%%%%%%%%%%%%
% APPENDIX
%%%%%%%%%%%%%%%%%%%%%%%%%%%%%%%%%%%%%%%%%%%%%%%%%%%%%%%%%%%%%%%%%%%%%%%%%%%%%%%
%%%%%%%%%%%%%%%%%%%%%%%%%%%%%%%%%%%%%%%%%%%%%%%%%%%%%%%%%%%%%%%%%%%%%%%%%%%%%%%
\newpage
\appendix
\onecolumn
% \clearpage
% \appendix
% \maketitlesupplementary

\noindent
\textbf{Main Content.}
\begin{itemize}
    \item \textbf{\cref{app:proof_monotonicity}}: Proof of Monotonicity for the Policy Ratio.
    \item \textbf{\cref{app:upo_theory}}: Formal Guarantees for Threshold-Guided Optimization.
    \item \textbf{\cref{app:quantitative_results}}: Additional Quantitative Results on Text-to-Image Generation.
    \item \textbf{\cref{app:qualitative_results}}: Additional Qualitative Results on Text-to-Image Generation.
    \item \textbf{\cref{app:ablation}}: Ablation Study.
    \item \textbf{\cref{app:Pseudocode}}: Pseudocode of the Threshold-Guided Loss.
   
\end{itemize}

\section{Proof of Monotonicity for the Policy Ratio}
\label{app:proof_monotonicity}

\begin{theorem}[Monotonicity of the Policy Ratio]
\label{thm:monotonicity}
Let the optimal policy $\pi^*(y|x)$ be defined by the KL-regularized objective:
\begin{equation}
\pi^*(y|x) = \frac{1}{Z(x)}\,\pi_{\mathrm{ref}}(y|x)\,\exp\left(\frac{1}{\beta}r(x, y)\right),
\end{equation}
where the partition function $Z(x) = \sum_{y' \in \mathcal{Y}} \pi_{\mathrm{ref}}(y'|x)\, \exp\left(\frac{1}{\beta}r(x, y')\right)$.
Then, for any $y_k \in \mathcal{Y}$ such that $\pi_{\mathrm{ref}}(y_k|x) > 0$, the ratio $\frac{\pi^*(y_k|x)}{\pi_{\mathrm{ref}}(y_k|x)}$ is a strictly increasing function of its reward $r(x, y_k)$, provided there exists at least one alternative response $y' \ne y_k$ with $\pi_{\mathrm{ref}}(y'|x) > 0$.
\end{theorem}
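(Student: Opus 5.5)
The plan is to treat the reward of the single response $y_k$ as a free variable $r_k := r(x,y_k)$ and hold all other rewards $r(x,y')$, $y'\neq y_k$, fixed. The key subtlety is that $Z(x)$ itself depends on $r_k$, so the ratio is not simply $\exp(r_k/\beta)$ up to a constant. Splitting the partition function, I write
\[
Z(x) = \pi_{\mathrm{ref}}(y_k|x)\,e^{r_k/\beta} + C,
\qquad
C := \sum_{y'\neq y_k} \pi_{\mathrm{ref}}(y'|x)\, e^{r(x,y')/\beta}.
\]
Here $C$ does not depend on $r_k$. The hypothesis that some $y'\neq y_k$ has $\pi_{\mathrm{ref}}(y'|x)>0$ gives $C>0$, since every exponential is strictly positive. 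I assume $C<\infty$, i.e.\ $Z(x)$ is finite, as is implicit in the definition of $\pi^*$.

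Dividing the definition of $\pi^*$ by $\pi_{\mathrm{ref}}(y_k|x)>0$ gives
\[
g(r_k) := \frac{\pi^*(y_k|x)}{\pi_{\mathrm{ref}}(y_k|x)} = \frac{e^{r_k/\beta}}{\pi_{\mathrm{ref}}(y_k|x)\,e^{r_k/\beta} + C}.
\]
To show strict monotonicity, I rewrite this as
\[
g(r_k) = \frac{1}{\pi_{\mathrm{ref}}(y_k|x) + C\,e^{-r_k/\beta}}.
\]
Since $\beta>0$ and $C>0$, the map $r_k\mapsto C e^{-r_k/\beta}$ is strictly decreasing. So the denominator is strictly decreasing and positive, and therefore $g$ is strictly increasing.

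As an alternative check, I can differentiate directly:
\[
g'(r_k) = \frac{C\,e^{r_k/\beta}}{\beta\,\big(\pi_{\mathrm{ref}}(y_k|x)e^{r_k/\beta}+C\big)^2} > 0.
\]

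The only real obstacle is the role of $C>0$. If $y_k$ were the only response in the support, then $C=0$ and $g\equiv 1/\pi_{\mathrm{ref}}(y_k|x)=1$ would be constant, so the ratio would not increase. This is exactly why the side condition in Theorem~\ref{thm:monotonicity} is needed, and I will remark on it explicitly. For continuous $\mathcal{Y}$, the same argument applies with the sum replaced by an integral, provided the response $y_k$ has positive $\pi_{\mathrm{ref}}$-mass; I will note this briefly as a remark rather than part of the proof.
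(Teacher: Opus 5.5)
Your proposal is correct and follows essentially the paper's argument. The paper also separates the $y_k$ term from $Z(x)$, differentiates the ratio with the quotient rule, and reduces the sign of the derivative to the positivity of $C=\sum_{y'\neq y_k}\pi_{\mathrm{ref}}(y'|x)e^{r(x,y')/\beta}$, which is exactly your ``alternative check''; your main rewriting $g(r_k)=1/\bigl(\pi_{\mathrm{ref}}(y_k|x)+C e^{-r_k/\beta}\bigr)$ is a derivative-free version of the same observation.
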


\begin{proof}
Fix $y_k \in \mathcal{Y}$ and define $r := r(x, y_k)$ to simplify notation. We analyze the function:
\begin{equation}
f(r) := \frac{\pi^*(y_k|x)}{\pi_{\mathrm{ref}}(y_k|x)} = \frac{\exp\left(r/\beta\right)}{Z(x)},
\end{equation}
where the normalization constant $Z(x)$ depends on $r$, since $r(x, y_k)$ is one of the terms in its summation.

To determine if $f(r)$ is strictly increasing, we compute its derivative with respect to $r$ using the quotient rule. Let $u(r) := \exp(r/\beta)$ and $v(r) := Z(x)$. Then $\frac{df}{dr} = \frac{u'v - uv'}{v^2}$.

The derivatives of $u(r)$ and $v(r)$ are:
\begin{equation}
% \resizebox{\linewidth}{!}{$
\begin{aligned}
u'(r) &= \tfrac{1}{\beta}\exp(r/\beta),\\
v'(r) &= \tfrac{d}{dr}\!\left[
  \sum_{y' \in \mathcal{Y}}\!
  \pi_{\mathrm{ref}}(y'|x)
  \exp\!\big(\tfrac{r(x,y')}{\beta}\big)
\right]
= \pi_{\mathrm{ref}}(y_k|x)\,\tfrac{1}{\beta}\exp(r/\beta).
\end{aligned}
% $}
\end{equation}

The derivative $v'(r)$ only contains the term corresponding to $y_k$ because all other rewards $r(x, y')$ for $y' \ne y_k$ are treated as constants with respect to $r$.

Substituting these into the quotient rule expression:
\begin{equation}
\label{eq:derivative_intermediate_monotonicity}
\begin{aligned}
\frac{df}{dr}
&= \frac{\big(\tfrac{1}{\beta}e^{r/\beta}\big) Z(x)
      - e^{r/\beta}\!\left(\pi_{\mathrm{ref}}(y_k|x)\,\tfrac{1}{\beta}e^{r/\beta}\right)}
         {Z(x)^2} \\
&= \frac{e^{r/\beta}}{\beta\, Z(x)^2}
   \Big[\,Z(x) - \pi_{\mathrm{ref}}(y_k|x)\,e^{r/\beta}\Big].
\end{aligned}
\end{equation}

The term in the brackets simplifies to:
\begin{equation}
% \resizebox{\linewidth}{!}{$
\begin{aligned}
Z(x) - \pi_{\mathrm{ref}}(y_k|x)e^{r/\beta}
&= \Big(\sum_{y' \in \mathcal{Y}} \pi_{\mathrm{ref}}(y'|x)\,
          \exp\!\big(\tfrac{r(x,y')}{\beta}\big)\Big)
   - \pi_{\mathrm{ref}}(y_k|x)\,\exp\!\big(\tfrac{r}{\beta}\big) \\
&= \sum_{y' \ne y_k} \pi_{\mathrm{ref}}(y'|x)\,
   \exp\!\big(\tfrac{r(x,y')}{\beta}\big).
\end{aligned}
% $}
\end{equation}

Since $\pi_{\mathrm{ref}}(y'|x) \ge 0$ and $\exp(\cdot) > 0$, each term in this sum is non-negative. By the theorem's condition, there is at least one $y' \ne y_k$ with $\pi_{\mathrm{ref}}(y'|x) > 0$, so this sum is strictly positive.

Therefore, the derivative in Eq.~\ref{eq:derivative_intermediate_monotonicity} is a product of strictly positive terms:
\begin{equation}
\begin{aligned}
\frac{df}{dr}
&= \underbrace{\tfrac{\exp(r/\beta)}{\beta\,Z(x)^2}}_{>0}
   \cdot
   \underbrace{\Bigg(
     \sum_{y' \ne y_k}
     \pi_{\mathrm{ref}}(y'|x)\,
     \exp\!\big(\tfrac{r(x,y')}{\beta}\big)
   \Bigg)}_{>0}
   > 0.
\end{aligned}
\end{equation}

Since the derivative is strictly positive, the function $f(r)$ is strictly increasing in $r$.
\end{proof}

\section{Formal Guarantees for Threshold-Guided Optimization}
\label{app:upo_theory}

This appendix collects the formal assumptions, theorems, and proofs for the guarantees of our threshold-guided objective $L_{\mathrm{TG}}$ summarized in Theorem~\ref{thm:upo-guarantees} in the main text. We view the minimizer of $L_{\mathrm{TG}}$ as the \emph{threshold-guided estimator} (TGO estimator).

\subsection{Assumptions}

\begin{assumption}[Regularity Conditions for Threshold-Guided Optimization]
\label{assump:regularity}
Let $\ell(\theta;z)$ denote the per-sample threshold-guided loss (the contribution of a single sample $z$ to $L_{\mathrm{TG}}$). Assume:
\begin{enumerate}
    \item (\textbf{Identifiability}) The population loss $L(\theta)=\mathbb{E}_z[\ell(\theta;z)]$ has a unique minimizer $\theta^*$ in an open neighborhood $\mathcal{N}$.
    \item (\textbf{Smoothness}) $\ell(\theta;z)$ is three-times continuously differentiable in $\mathcal{N}$ almost surely. The population derivatives $\nabla^k L(\theta)$ for $k=1,2,3$ exist and are continuous at $\theta^*$.
    \item (\textbf{Regularity}) The Hessian $H=\nabla^2 L(\theta^*)$ is positive definite. The score $\nabla\ell(\theta^*;z)$ has finite second moments with covariance $S=\mathrm{Cov}(\nabla\ell(\theta^*;z))$. A Central Limit Theorem holds for $\sqrt{n}\,\nabla L_n(\theta^*)$.
\end{enumerate}
\end{assumption}

\subsection{Consistency of the Threshold-Guided Estimator}

\begin{corollary}[Consistency of Threshold-Guided Optimization]
\label{cor:consistency}
Under Assumption~\ref{assump:regularity}, the empirical TGO estimator
\[
\hat\theta_n = \arg\min_\theta L_n(\theta), 
\quad L_n(\theta) = \tfrac{1}{n}\sum_{i=1}^n \ell(\theta;z_i)
\]
is consistent, \textit{i.e.}, $\hat\theta_n \xrightarrow{p} \theta^*$ as $n\to\infty$.
\end{corollary}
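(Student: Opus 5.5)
The plan is the standard M-estimation argument, localized to the neighborhood $\mathcal{N}$ in Assumption~\ref{assump:regularity}. Consistency does not follow from pointwise convergence alone, so I would use the smoothness and positive-definiteness of the Hessian to get local strong convexity. Throughout I read $\hat\theta_n$ as a minimizer of $L_n$ over a fixed compact ball $\bar B(\theta^*,r)\subset\mathcal{N}$; the argument is only a local consistency statement, since the assumptions are local.

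\textbf{Step 1: a quadratic lower bound on $L$ near $\theta^*$.} Since $\nabla L(\theta^*)=0$ and $H=\nabla^2L(\theta^*)\succ 0$ with $\nabla^2 L$ continuous at $\theta^*$, I choose $r>0$ small enough that $\nabla^2 L(\theta)\succeq \tfrac{\lambda}{2}I$ on $\bar B(\theta^*,r)$, where $\lambda=\lambda_{\min}(H)$. A Taylor expansion then gives
\[
L(\theta)-L(\theta^*)\ge \tfrac{\lambda}{4}\|\theta-\theta^*\|^2 \quad\text{on } \bar B(\theta^*,r).
\]
Hence for every $\epsilon\in(0,r)$ the separation gap $\eta(\epsilon)=\inf_{\epsilon\le\|\theta-\theta^*\|\le r}\,[L(\theta)-L(\theta^*)]$ is strictly positive. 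This is the identifiability condition in a quantitative form.

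\textbf{Step 2: uniform convergence of $L_n$ to $L$ on $\bar B(\theta^*,r)$.} This is the step I expect to be the main obstacle, because the stated assumptions only give pointwise moments. I would try two routes.

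\emph{Route (a).} Use the weak LLN pointwise, together with a stochastic equicontinuity bound from $\sup_{\theta\in\bar B}\|\nabla\ell(\theta;z)\|\le M(z)$ with $\mathbb{E}M<\infty$. This is a Lipschitz-in-parameter envelope, which is mild here: $\ell$ is a weighted logistic loss, $w=1+c|s-\tau|$, and $|d\log\sigma(u)/du|\le1$. So it reduces to integrability of $w\,\|\nabla\hat s_\theta\|$. The ULLN then follows by a finite $\delta$-net argument on the compact ball.

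\emph{Route (b).} Avoid uniformity altogether by a convexity-style argument along directions. This needs more structure, so I would use it only if route (a) fails.

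I would state the envelope condition explicitly as being implied by, or added to, the ``mild regularity conditions.''

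\textbf{Step 3: conclude.} On the event $\sup_{\bar B}|L_n-L|<\eta(\epsilon)/2$, whose probability tends to one, any $\theta$ with $\epsilon\le\|\theta-\theta^*\|\le r$ satisfies
\[
L_n(\theta)>L(\theta^*)+\eta/2>L_n(\theta^*).
\]
So such a $\theta$ cannot be the minimizer, and therefore $\|\hat\theta_n-\theta^*\|<\epsilon$. This gives $\hat\theta_n\xrightarrow{p}\theta^*$.

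As an alternative I would remark on briefly, the CLT in Assumption~\ref{assump:regularity} combined with the Taylor expansion $\hat\theta_n-\theta^*=-H^{-1}\nabla L_n(\theta^*)+o_p(n^{-1/2})$ yields the stronger rate $O_p(n^{-1/2})$. However, that expansion already presupposes consistency, so it cannot replace Steps 1–3.
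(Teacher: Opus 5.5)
Your proof follows the same route as the paper's: uniform convergence of $L_n$ to $L$, combined with a unique (well-separated) minimizer, fed into the argmin consistency theorem. The paper simply asserts the uniform convergence ``by standard M-estimation theory,'' whereas you localize to a compact ball $\bar B(\theta^*,r)\subset\mathcal{N}$, extract the separation gap from $H\succ 0$, and supply the Lipschitz envelope that the uniform law of large numbers actually needs. Your caveats are warranted rather than a weakness: Assumption~\ref{assump:regularity} is purely local and contains no envelope or compactness condition, so consistency of the global argmin cannot be derived from it as stated, and your local version under an explicit envelope is the defensible form of the result.
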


\begin{proof}
By standard M-estimation theory, $L_n(\theta)$ converges uniformly to $L(\theta)$, and $L(\theta)$ has a unique minimizer $\theta^*$ under Assumption~\ref{assump:regularity}. The argmin consistency theorem therefore yields $\hat\theta_n \xrightarrow{p} \theta^*$.
\end{proof}

\subsection{Asymptotic Bias of the Threshold-Guided Estimator}

\begin{theorem}[Asymptotic Bias of Threshold-Guided Optimization]
\label{thm:bias}
Under Assumption~\ref{assump:regularity}, the expectation of $\hat\theta_n$ admits the expansion
\[
\mathbb{E}[\hat\theta_n]-\theta^* = \tfrac{1}{n} B_1(\theta^*) + o(1/n),
\]
where the $a$-th coordinate of $B_1(\theta^*)$ is
\begin{equation}
\label{eq:B1-index}
(B_1(\theta^*))_a = -\tfrac{1}{2} H^{-1}_{ab}\, J_{bcd}\,(H^{-1} S H^{-1})_{cd},
\end{equation}
with $H=\nabla^2 L(\theta^*)$, $S=\mathrm{Cov}(\nabla\ell(\theta^*;z))$, and $J_{bcd}=\mathbb{E}[\partial^3 \ell(\theta^*;z)/\partial\theta_b\partial\theta_c\partial\theta_d]$.
\end{theorem}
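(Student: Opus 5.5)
The plan is a second-order stochastic expansion of the first-order condition $\nabla L_n(\hat\theta_n)=0$ around $\theta^*$, followed by taking expectations. Corollary~\ref{cor:consistency} gives $\hat\theta_n\xrightarrow{p}\theta^*$, so with probability tending to one $\hat\theta_n$ lies in $\mathcal{N}$. There the $C^3$ smoothness of Assumption~\ref{assump:regularity} allows a Taylor expansion in index notation, with $\delta:=\hat\theta_n-\theta^*$:
\[
0=\partial_b L_n(\theta^*)+\partial_{bc}L_n(\theta^*)\,\delta_c+\tfrac12\,\partial_{bcd}L_n(\bar\theta)\,\delta_c\delta_d ,
\]
where $\bar\theta$ lies on the segment between $\theta^*$ and $\hat\theta_n$.

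\textbf{First-order step.} Write $g:=\nabla L_n(\theta^*)$. Then $\mathbb{E}[g]=\nabla L(\theta^*)=0$, and $\mathrm{Cov}(g)=S/n$ because the $z_i$ are i.i.d. Since $H$ is positive definite and the CLT of Assumption~\ref{assump:regularity} holds, I will first show $\delta=-H^{-1}g+O_p(1/n)$. It follows that $\sqrt n\,\delta$ is asymptotically normal with covariance $H^{-1}SH^{-1}$, and in particular $\mathbb{E}[\delta_c\delta_d]=\tfrac1n(H^{-1}SH^{-1})_{cd}+o(1/n)$.

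\textbf{Second-order step.} I will replace $\partial_{bc}L_n(\theta^*)$ by $H_{bc}$. I will also replace $\partial_{bcd}L_n(\bar\theta)$ by $J_{bcd}$, using the law of large numbers together with continuity of $\nabla^3 L$ at $\theta^*$ and $\bar\theta\to\theta^*$. Solving for $\delta$ then gives
\[
\delta_a=-H^{-1}_{ab}\,g_b-\tfrac12 H^{-1}_{ab}J_{bcd}\,\delta_c\delta_d+r_n .
\]
Taking expectations, the linear term has mean zero. Substituting the covariance from the first step into the quadratic term gives $-\tfrac{1}{2n}H^{-1}_{ab}J_{bcd}(H^{-1}SH^{-1})_{cd}$, which is exactly the coordinate formula \eqref{eq:B1-index}. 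What remains is to show $\mathbb{E}[r_n]=o(1/n)$. For this I will use the uniform integrability implied by the finite moments and local smoothness in Assumption~\ref{assump:regularity}, so that $O_p$ bounds can be upgraded to bounds on expectations. This is the standard step in Cox--Snell style bias expansions.

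\textbf{Main obstacle.} The delicate point is the Hessian replacement $\partial_{bc}L_n(\theta^*)\to H_{bc}$. If one keeps its $O_p(n^{-1/2})$ fluctuation $\nabla^2L_n(\theta^*)-H$, the product with $\delta=O_p(n^{-1/2})$ contributes a term of order $1/n$ in expectation, namely $\tfrac1n H^{-1}\mathbb{E}[(\nabla^2\ell-H)H^{-1}\nabla\ell]$. The stated formula contains no such term, so I will carry out the expansion with the Hessian fixed at its population value $H$: $\hat\theta_n$ is treated as the root of the expansion whose curvature is deterministic, and only score and cubic effects are random. This is the reading under which \eqref{eq:B1-index} is exact. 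I would first check that this matches the intended estimator. If the fluctuation of the empirical Hessian is to be retained, the argument goes through only when that cross-covariance vanishes, for example when $\nabla^2\ell(\theta^*;z)$ and $\nabla\ell(\theta^*;z)$ are uncorrelated. I would state that condition explicitly alongside Assumption~\ref{assump:regularity}.
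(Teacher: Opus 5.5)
Your skeleton is the paper's own: Taylor-expand $\nabla L_n(\hat\theta_n)=0$, replace $\nabla^2 L_n(\theta^*)$ and $\nabla^3 L_n(\bar\theta)$ by $H$ and $J$, and insert $\mathbb{E}[\Delta\otimes\Delta]=\tfrac1n H^{-1}SH^{-1}+o(1/n)$. The obstacle you flag is genuine. It is exactly where the paper's Step~3 fails.

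\textbf{The Hessian fluctuation term.} The paper justifies replacing $[\nabla^2 L_n(\theta^*)]^{-1}$ by $H^{-1}$ using only $\mathbb{E}[\nabla L_n(\theta^*)]=0$ and $\nabla^2 L_n(\theta^*)\xrightarrow{p}H$. But writing $[\nabla^2 L_n(\theta^*)]^{-1}=H^{-1}-H^{-1}(\nabla^2 L_n(\theta^*)-H)H^{-1}+O_p(1/n)$ shows that the linear term $-[\nabla^2 L_n(\theta^*)]^{-1}\nabla L_n(\theta^*)$ has expectation $\tfrac1n H^{-1}\mathbb{E}[\nabla^2\ell(\theta^*;z)\,H^{-1}\nabla\ell(\theta^*;z)]+o(1/n)$. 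That is the same order as the claimed bias. So the displayed formula does not follow from Assumption~\ref{assump:regularity}, and in fact it is false.

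A concrete counterexample: take $\ell(\theta;z)=\tfrac12 z\theta^2-\theta$ with $z\sim\mathrm{Unif}[1,2]$, mean $\mu$, variance $\sigma^2$.
\begin{itemize}
\item Every item of the assumption holds, and $J=0$, so the statement predicts $\mathbb{E}[\hat\theta_n]-\theta^*=o(1/n)$.
\item However $\hat\theta_n=1/\bar z_n$ and $\theta^*=1/\mu$, so $\mathbb{E}[\hat\theta_n]-\theta^*=\sigma^2/(n\mu^3)+o(1/n)$.
\item This is precisely your cross term.
\end{itemize}

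\textbf{Consequences for your two resolutions.} Your first resolution does not prove the theorem. Declaring the curvature deterministic yields a bias formula for a linearized surrogate, not for $\hat\theta_n=\arg\min_\theta L_n(\theta)$, which is the object the statement concerns. The valid options are:
\begin{itemize}
\item your second one, i.e.\ add the hypothesis $\mathbb{E}[\nabla^2\ell(\theta^*;z)\,H^{-1}\nabla\ell(\theta^*;z)]=0$; or
\item correct the conclusion to
\[
(B_1(\theta^*))_a = H^{-1}_{ab}\,\mathbb{E}\big[\partial_{bc}\ell\,H^{-1}_{cd}\,\partial_{d}\ell\big] - \tfrac12\, H^{-1}_{ab}\,J_{bcd}\,(H^{-1}SH^{-1})_{cd},
\]
with all derivatives of $\ell$ evaluated at $(\theta^*;z)$.
\end{itemize}
For the TGO loss the extra hypothesis is not automatic. $\nabla^2\ell$ depends on $z$ through $w(s,\tau)$, through $l$, and through $\nabla_\theta \hat s_{\theta,\mathrm{ref}}$, so it is generally correlated with the score.

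\textbf{Moment steps.} Two of your moment steps are also not implied by the assumption as written; the paper makes the same leaps.
\begin{itemize}
\item Asymptotic normality of $\sqrt n\,\delta$ does not give $\mathbb{E}[\delta_c\delta_d]=\tfrac1n(H^{-1}SH^{-1})_{cd}+o(1/n)$. That requires convergence of second moments, not just convergence in distribution.
\item The assumption controls only second moments of the score at $\theta^*$. So neither uniform integrability of $n\,r_n$ nor even the existence of $\mathbb{E}[\hat\theta_n]$ follows.
\end{itemize}
To make the upgrade from $O_p$ bounds to bounds on expectations, you need extra hypotheses. Sufficient ones would be, e.g.:
\begin{itemize}
\item $\hat\theta_n$ confined to a bounded set;
\item $\Pr[\hat\theta_n\notin\mathcal N]=o(1/n)$;
\item sufficiently high moments of $\nabla\ell$ and $\nabla^2\ell$ at $\theta^*$;
\item an integrable local envelope for $\nabla^3\ell$ on $\mathcal N$.
\end{itemize}
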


\begin{proof}
\textbf{Step 1: First-order condition and Taylor expansion.}  
By optimality, $\nabla L_n(\hat\theta_n)=0$. Expanding around $\theta^*$ gives
\begin{equation}
\label{eq:taylor_emp_grad_full}
\begin{aligned}
0
&= \nabla L_n(\theta^*)
 + \nabla^2 L_n(\theta^*)(\hat\theta_n-\theta^*) \\
&\quad
 + \tfrac{1}{2}\,\nabla^3 L_n(\bar\theta)
    [\hat\theta_n-\theta^*,\,\hat\theta_n-\theta^*]
 + r_n ,
\end{aligned}
\end{equation}
where $\bar\theta$ lies between $\hat\theta_n$ and $\theta^*$, and $r_n=o_p(\|\hat\theta_n-\theta^*\|^2)=o_p(n^{-1})$.

\textbf{Step 2: Isolate $\Delta=\hat\theta_n-\theta^*$.}  
Rearranging Eq.~\ref{eq:taylor_emp_grad_full} yields
\begin{equation}
\begin{aligned}
\Delta
&= -\big[\nabla^2 L_n(\theta^*)\big]^{-1}\nabla L_n(\theta^*) \\
&\quad
  - \tfrac{1}{2}\big[\nabla^2 L_n(\theta^*)\big]^{-1}
    \nabla^3 L_n(\bar\theta)[\Delta,\Delta]
  + o_p(n^{-1}) .
\end{aligned}
\end{equation}

\textbf{Step 3: Take expectations.}  
Since $\mathbb{E}[\nabla L_n(\theta^*)]=0$ and $\nabla^2 L_n(\theta^*)\overset{p}{\to} H$, we can replace the empirical Hessian and third derivatives by their population counterparts $H$ and $J$ up to $o(n^{-1})$ terms:
\begin{equation}
\label{eq:expectation_step}
\mathbb{E}[\Delta] = -\tfrac{1}{2}\, H^{-1} J\,\mathbb{E}[\Delta\otimes\Delta] + o(n^{-1}).
\end{equation}

\textbf{Step 4: Insert asymptotic covariance.}  
From standard M-estimator theory,
\begin{equation}
\label{eq:second_moment}
\mathbb{E}[\Delta\otimes\Delta] = \frac{1}{n}\, H^{-1} S H^{-1} + o(n^{-1}).
\end{equation}
Substituting Eq.~\ref{eq:second_moment} into Eq.~\ref{eq:expectation_step} gives
\[
\mathbb{E}[\Delta] = -\frac{1}{2n}\, H^{-1} J \big(H^{-1} S H^{-1}\big) + o(n^{-1}),
\]
which matches Eq.~\ref{eq:B1-index}.
\end{proof}

\textbf{Remark.} If $\ell(\theta;z)$ is the negative log-likelihood of a correctly specified model, then $S=H=I(\theta^*)$ (the Fisher information), which further simplifies the bias term.

\subsection{Calibration of Threshold-Guided Pseudo-Labels}

\begin{proposition}[Calibration of Threshold-Guided Pseudo-Labels]
\label{prop:calibration}
Let $\tau^*(x)=\beta\log Z(x)$ denote the KL-optimal baseline. Suppose scores satisfy $s=g(R(x,y))+\xi$, where $g$ is strictly increasing and $\xi$ is sub-Gaussian. If $\tau$ is estimated as the empirical $p$-quantile with error $\varepsilon_\tau = O(1/\sqrt{n})$, then
\[
\Pr\!\big[l \neq l^*\big] = O(\varepsilon_\tau + \|\xi\|_{\psi_2}),
\]
where $l=\mathbb{1}[s\ge\tau]$ and $l^*=\mathbb{1}[R(x,y)\ge\tau^*(x)]$.
\end{proposition}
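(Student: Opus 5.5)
The plan is to reduce the label-disagreement event $\{l\neq l^*\}$ to a small-margin event for the transformed reward $g(R(x,y))$, and then bound that event with a density argument. The statement as written does not control the mismatch between the global threshold and the instance-dependent baseline. I will therefore make that explicit with an \emph{alignment} condition: the population $p$-quantile $\tau_p$ of $s$ satisfies $\tau_p=g(\tau^*(x))$ for all $x$. Equivalently, one carries the extra term $\delta:=\sup_x|g(\tau^*(x))-\tau_p|$ through the argument; the bound then reads $O(\varepsilon_\tau+\|\xi\|_{\psi_2}+\delta)$, and the stated form is the case $\delta=0$.

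I also add two mild conditions. First, $\xi$ is independent of $(x,y)$. Second, $G:=g(R(x,y))$ has a density bounded by some $f_{\max}$. Under these assumptions the argument proceeds in four steps.

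\textbf{Step 1 (rewrite the oracle label).} Since $g$ is strictly increasing, $l^*=\mathbb{1}[G\ge g(\tau^*(x))]$. Under the alignment condition this equals $\mathbb{1}[G\ge\tau_p]$. Meanwhile $l=\mathbb{1}[G+\xi\ge\tau]$.

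\textbf{Step 2 (reduce disagreement to a margin event).} If $l\neq l^*$, then $G-\tau_p$ and $G+\xi-\tau$ lie on opposite sides of zero. Hence
\[
|G-\tau_p|\le|\xi|+|\tau-\tau_p| .
\]
A union bound then gives
\[
\Pr[l\neq l^*]\le\Pr\big[|G-\tau_p|\le 2|\tau-\tau_p|\big]+\Pr\big[|G-\tau_p|\le 2|\xi|\big].
\]

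\textbf{Step 3 (bound each margin term).} For the first term, the bounded density of $G$ gives a bound of at most $4f_{\max}\varepsilon_\tau$ on the event $|\tau-\tau_p|\le\varepsilon_\tau$. For the second term, condition on $\xi$ and use independence to get $\le 4f_{\max}\,\mathbb{E}|\xi|$. The sub-Gaussian moment bound $\mathbb{E}|\xi|\le C\|\xi\|_{\psi_2}$ then yields the $O(\|\xi\|_{\psi_2})$ term.

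\textbf{Step 4 (quantile estimation rate).} To justify $\varepsilon_\tau=O(1/\sqrt n)$, I will use the DKW inequality, $\sup_t|F_n(t)-F(t)|\le\sqrt{\log(2/\eta)/(2n)}$ with probability at least $1-\eta$. Combined with a lower bound on the density of $s$ near $\tau_p$, inverting the CDF gives $|\tau-\tau_p|=O(n^{-1/2})$ with high probability. Choosing $\eta=n^{-1/2}$ absorbs the failure event into the $O(\varepsilon_\tau)$ term.

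The main obstacle is Step 1: linking the single global $\tau$ to the prompt-dependent $\tau^*(x)=\beta\log Z(x)$. Nothing in Eq.~\eqref{eq:optimal_policy} forces $g(\tau^*(x))$ to be constant in $x$, so some alignment assumption, or the additive slack $\delta$, is unavoidable. I would first try to state the result with $\delta$ made explicit; this is also consistent with the paper's Limitations discussion of heterogeneous score distributions across prompts. The remaining steps are routine margin and concentration bounds.
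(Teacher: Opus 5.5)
Your proposal follows the same route as the paper, whose proof is only a sketch. The sketch reduces $l^*$ to thresholding via monotonicity of $g$, asserts that ``label flips occur only when either $\xi$ or $\varepsilon_\tau$ is large enough to cross the decision boundary,'' and invokes $O(1/\sqrt{n})$ quantile concentration. Your Steps 1--3 are a rigorous version of exactly this, and the hypotheses you add are the ones the sketch uses silently. The quoted sentence presupposes that the population quantile $\tau_p$ equals $g(\tau^*(x))$. Turning ``crossing the boundary'' into a probability linear in the perturbation size needs your bounded-density condition on $G$ and the independence of $\xi$ from $(x,y)$.

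The alignment condition (or the slack $\delta$) is genuinely necessary, not just convenient. Take $\xi\equiv 0$, $g=\mathrm{id}$, the exact median, and $R\sim U[0,1]$ under $\pi_{\mathrm{ref}}$ with $\beta=1$. Then $\tau^*=\log(e-1)\approx 0.541$, so $\Pr[l\neq l^*]\approx 0.04$, while the stated right-hand side is $0$.

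There is one small slip in Step 4: DKW with $\eta=n^{-1/2}$ gives $|\tau-\tau_p|=O(\sqrt{\log n/n})$, not $O(n^{-1/2})$. You can fix this in either of two ways. First, treat $\varepsilon_\tau$ as the given hypothesis, as the statement does, and carry $\Pr[|\tau-\tau_p|>\varepsilon_\tau]$ as an extra term. Second, when the classified sample is independent of the set used to estimate $\tau$ (for instance the proxy set), bound the first margin term by $\mathbb{E}\min(1,4f_{\max}|\tau-\tau_p|)$ and integrate the DKW tail to obtain a clean $O(n^{-1/2})$.
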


\begin{proof}
By Theorem~\ref{thm:monotonicity} in Appendix~\ref{app:proof_monotonicity}, the KL-optimal rule is equivalent to thresholding $R(x,y)$ against $\tau^*(x)$. Since $g$ is strictly monotone, comparing $s$ is equivalent to comparing $R$ up to the additive noise $\xi$. The empirical quantile $\tau$ concentrates around the true quantile at rate $O(1/\sqrt{n})$, and label flips occur only when either $\xi$ or $\varepsilon_\tau$ is large enough to cross the decision boundary. This yields the stated bound.
\end{proof}

\section{Additional Quantitative Results on Text-to-Image Generation}
\label{app:quantitative_results}

We further conduct a GPT-based evaluation of threshold-guided optimization over Stable Diffusion v1.5. 
For each prompt in the Pick-a-Pic test set, we prepare one image generated by TGO and one image generated by a baseline model, and then randomly swap their order with 50\% probability. 
We adopt GPT-5 as an automated judge with the following comparative instruction:
\emph{``Given the text, which image better matches the description in terms of semantics and visual quality?''}
For each baseline, we report the fraction of prompts where GPT prefers TGO, as well as the tie rate, in Figure~\ref{fig:t2i-gpt}. 
GPT preferences are consistent with the reward-model–based metrics in Tab.~\ref{tab:t2i-general} and consistently favor TGO over all baselines.

\begin{figure}[h]
    \centering
    % \vspace{-5pt}
    \includegraphics[width=.7\linewidth]{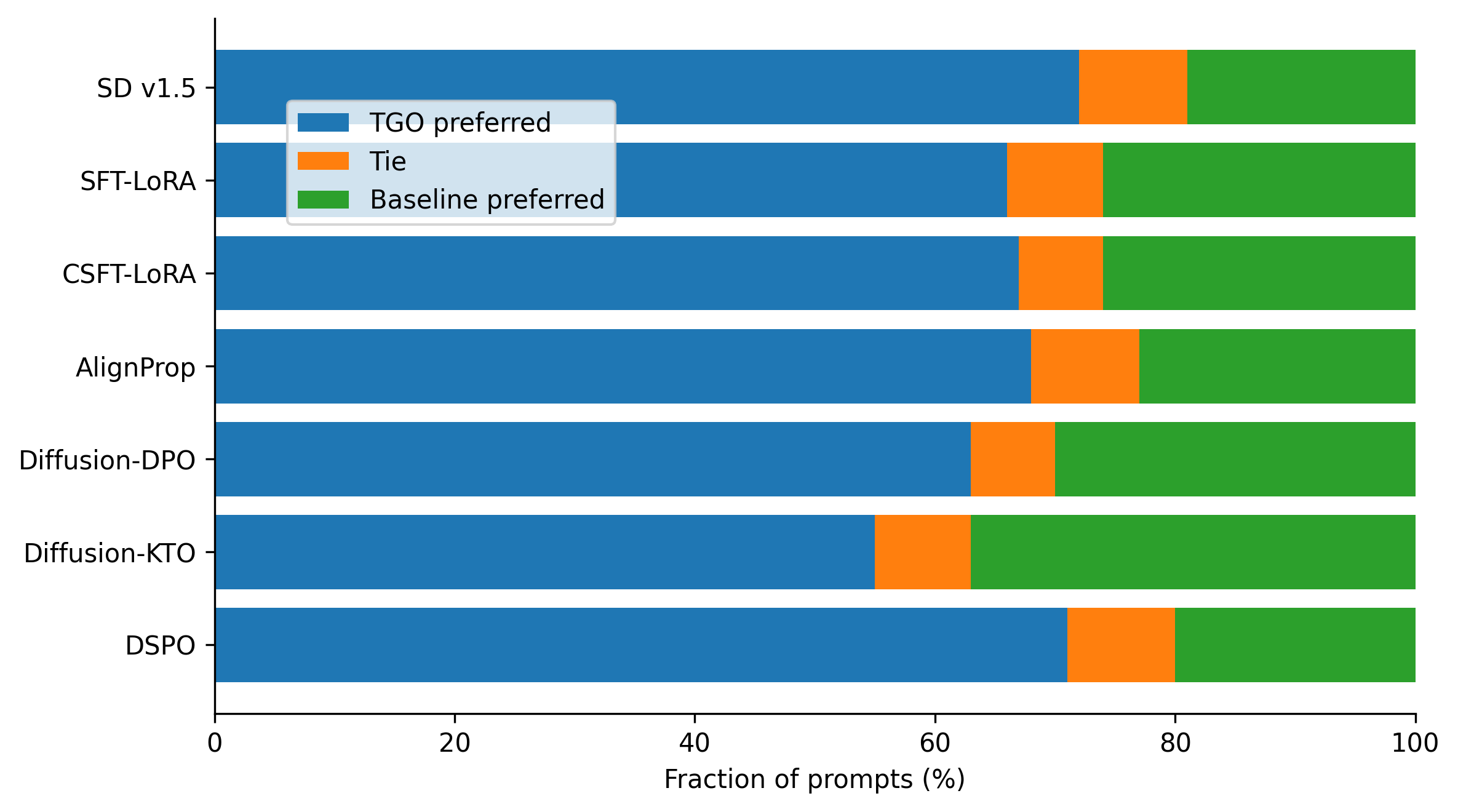}
    \caption{GPT-based evaluation of threshold-guided optimization over Stable Diffusion v1.5 on the Pick-a-Pic test set. For each baseline, we show a stacked horizontal bar indicating the fraction of prompts where GPT-5 prefers TGO, judges a tie, or prefers the baseline. }
    \label{fig:t2i-gpt}
   % \vspace{-8pt}
\end{figure}

\begin{figure*}[h]
    \centering
    % \vspace{-5pt}
    \includegraphics[width=.98\linewidth]{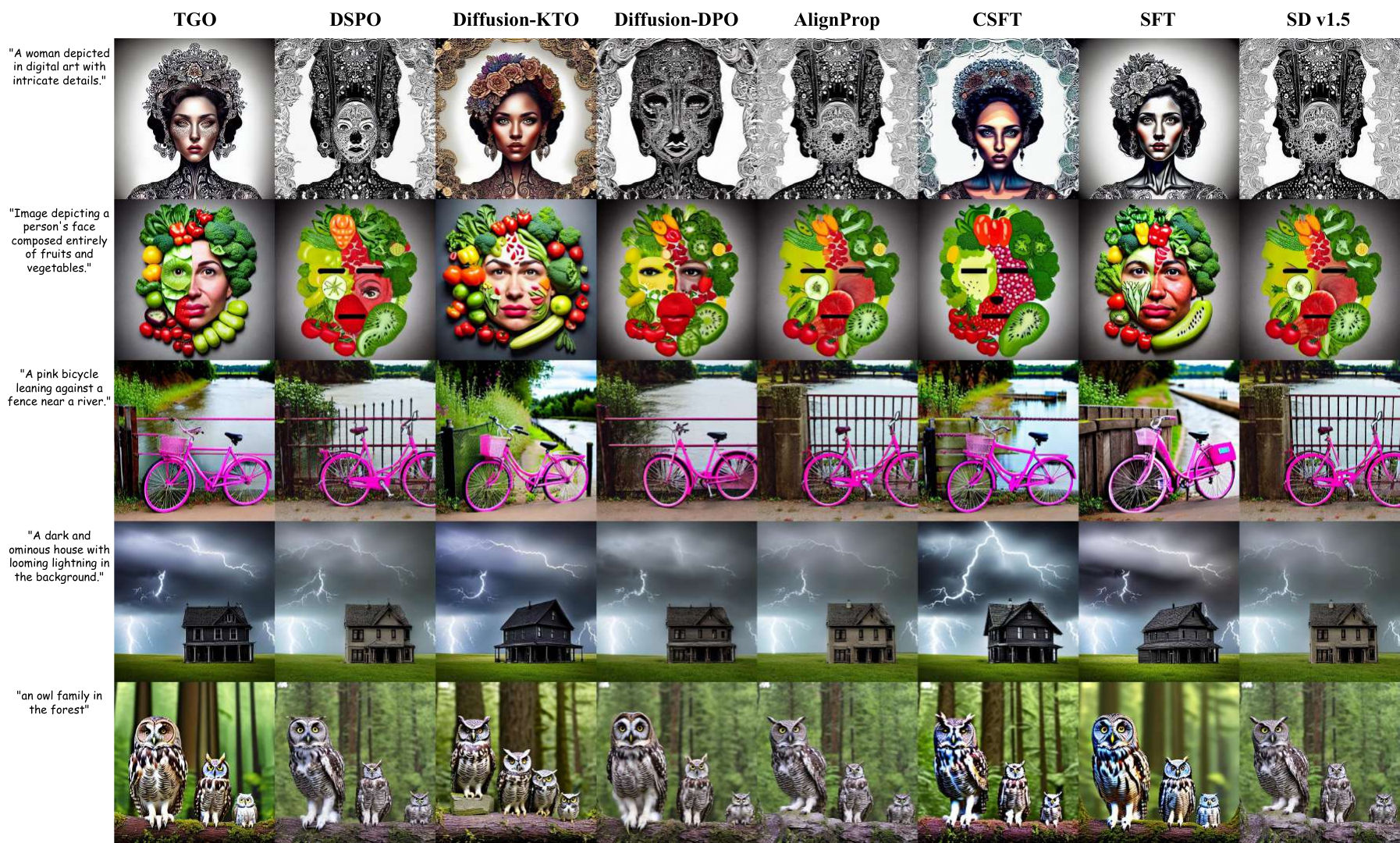}
    
    \caption{More qualitative comparison on Stable Diffusion v1.5. Each column corresponds to one finetuning method (from left to right: TGO (ours), DSPO, Diffusion-KTO, Diffusion-DPO, AlignProp, CSFT, SFT, and the original SD~v1.5). Each row shows images generated from the same text prompt (listed on the left), sampled from the HPS~v2, Pick-a-Pic, and PartiPrompts test sets.}
    \label{fig:app_sd15_2}
   % \vspace{-8pt}
\end{figure*}

\begin{figure*}[h]
    \centering
    % \vspace{-5pt}
    \includegraphics[width=.98\linewidth]{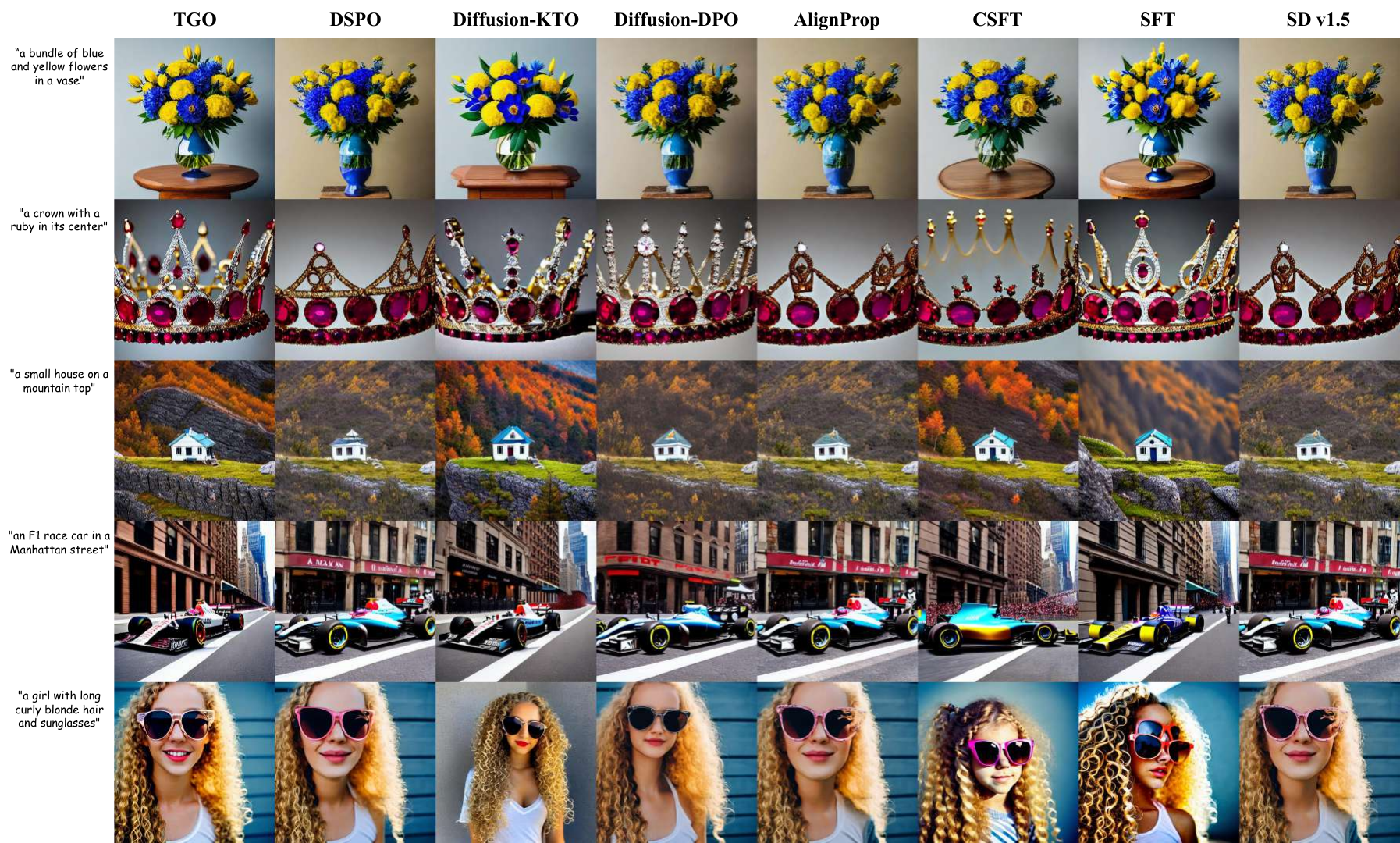}
    
    \caption{More qualitative comparison on Stable Diffusion v1.5. Each column corresponds to one finetuning method (from left to right: TGO (ours), DSPO, Diffusion-KTO, Diffusion-DPO, AlignProp, CSFT, SFT, and the original SD~v1.5). Each row shows images generated from the same text prompt (listed on the left), sampled from the HPS~v2, Pick-a-Pic, and PartiPrompts test sets.}
    \label{fig:app_sd15_3}
   % \vspace{-8pt}
\end{figure*}

\begin{figure*}[h]
    \centering
    % \vspace{-5pt}
    \includegraphics[width=.98\linewidth]{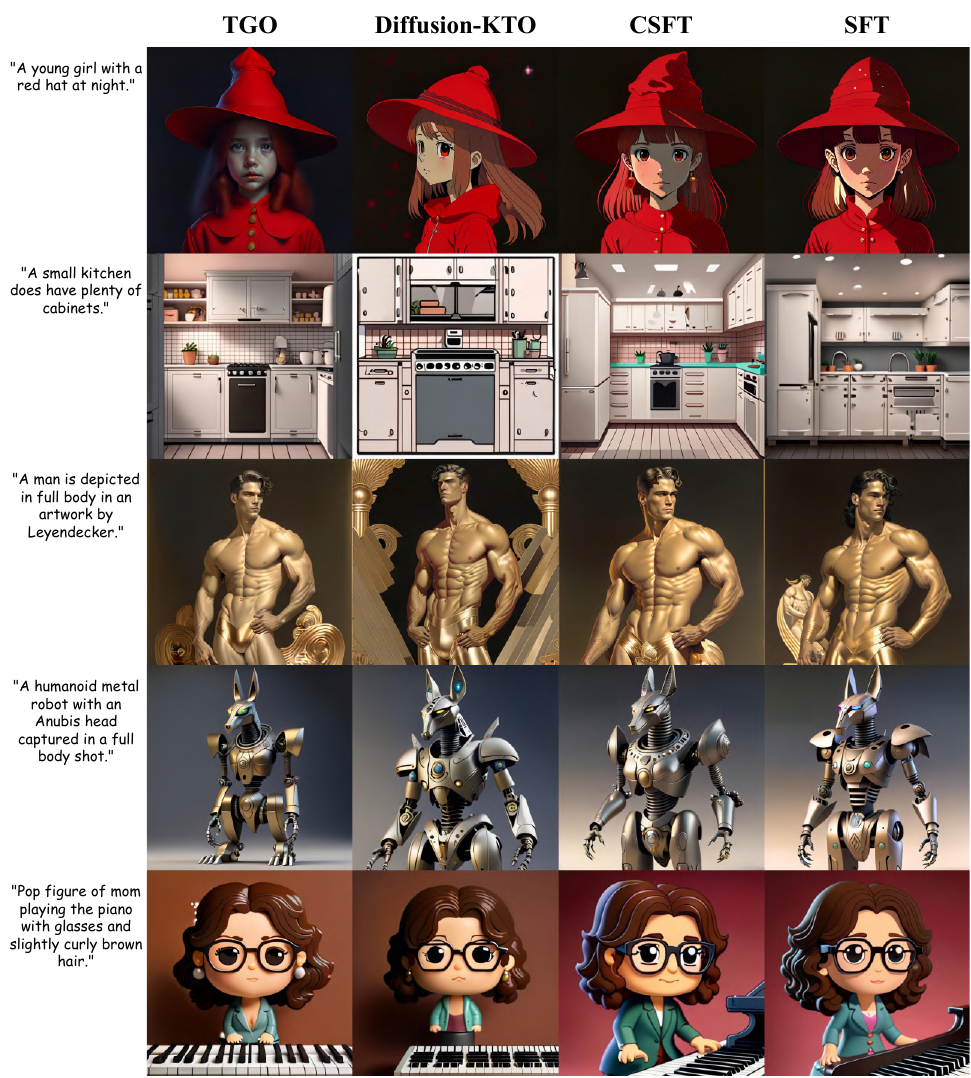}
    
    \caption{More qualitative comparison on Meissonic. Each column corresponds to one finetuning method (from left to right: TGO (ours), Diffusion-KTO, CSFT and SFT). Each row shows images generated from the same text prompt (listed on the left), sampled from the HPS~v2, Pick-a-Pic, and PartiPrompts test sets.}
    \label{fig:mei_1}
   % \vspace{-8pt}
\end{figure*}

\begin{figure*}[h]
    \centering
    % \vspace{-5pt}
    \includegraphics[width=.98\linewidth]{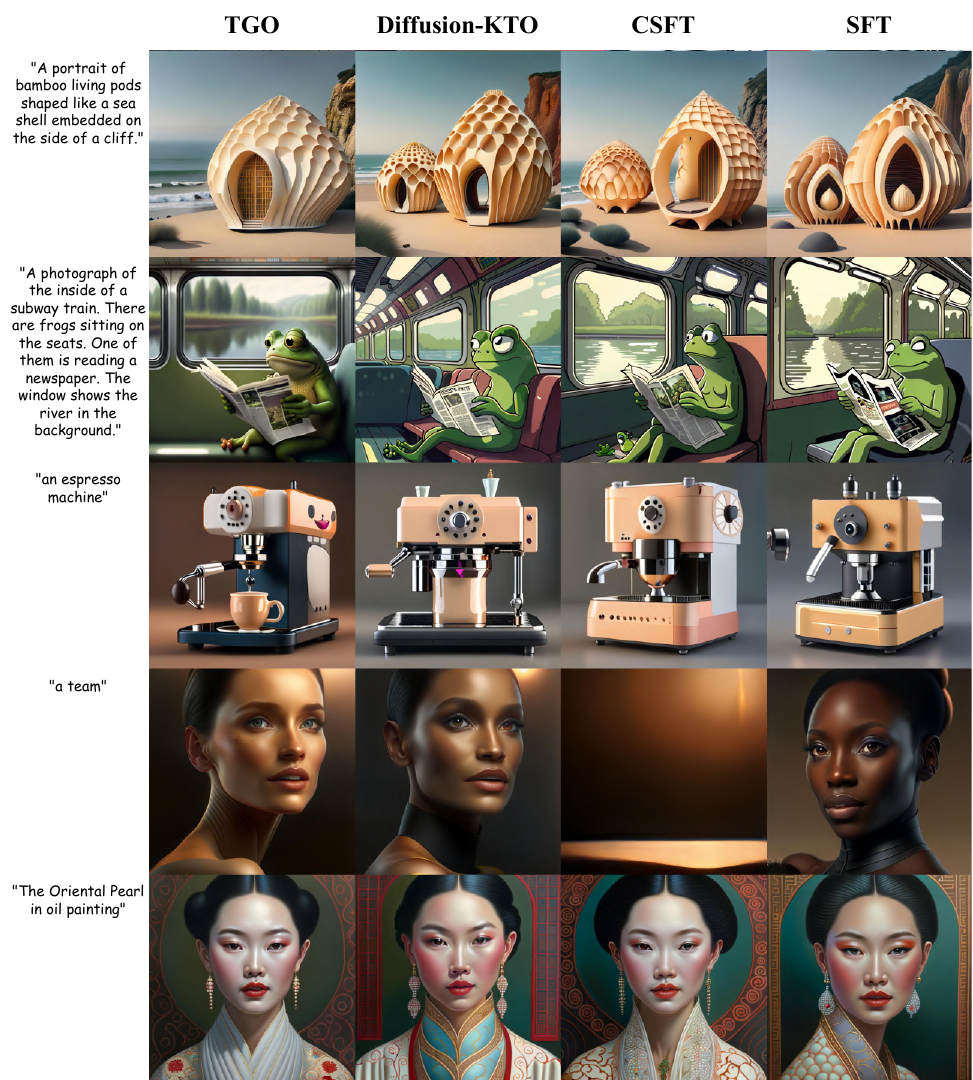}
    
    \caption{More qualitative comparison on Meissonic. Each column corresponds to one finetuning method (from left to right: TGO (ours), Diffusion-KTO, CSFT and SFT). Each row shows images generated from the same text prompt (listed on the left), sampled from the HPS~v2, Pick-a-Pic, and PartiPrompts test sets.}
    \label{fig:mei_2}
   % \vspace{-8pt}
\end{figure*}

\begin{figure*}[h]
    \centering
    % \vspace{-5pt}
    \includegraphics[width=.98\linewidth]{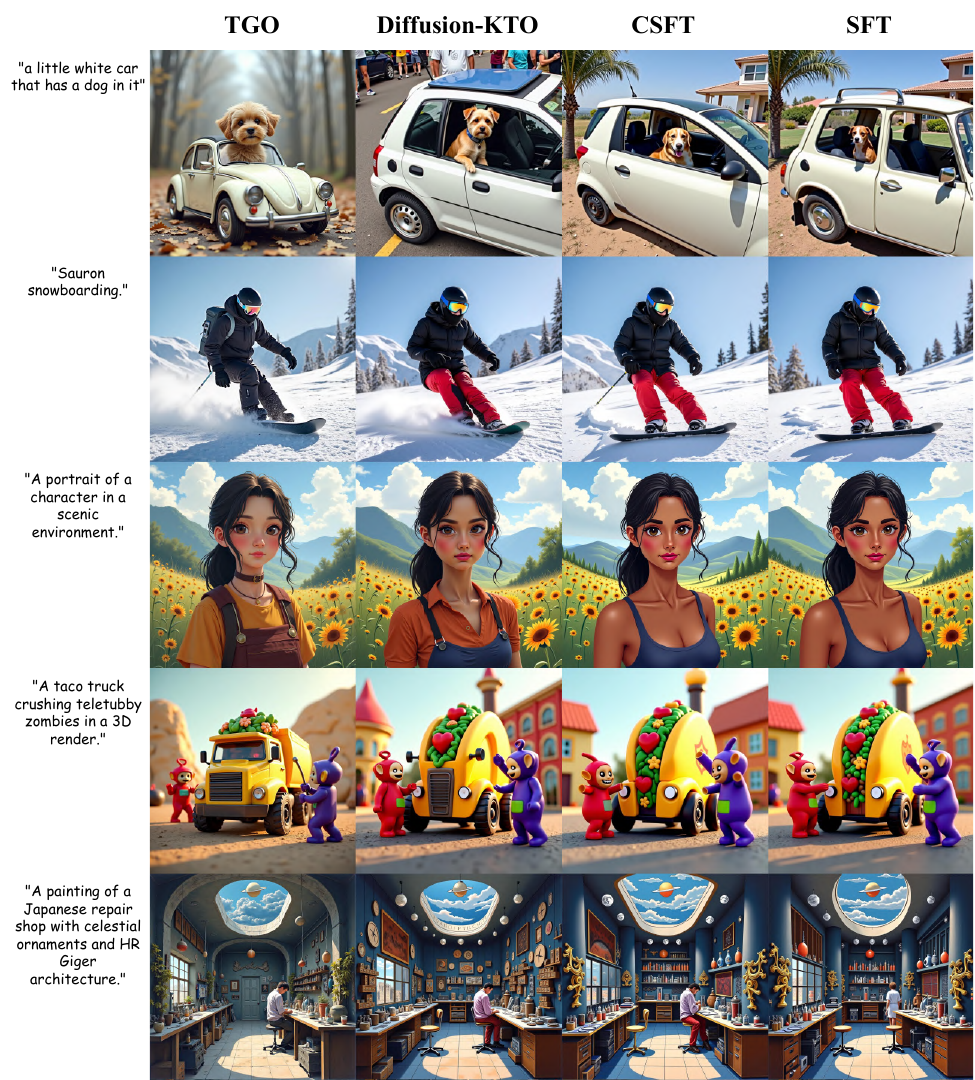}
    
    \caption{More qualitative comparison on Flux. Each column corresponds to one finetuning method (from left to right: TGO (ours), Diffusion-KTO, CSFT and SFT). Each row shows images generated from the same text prompt (listed on the left), sampled from the HPS~v2, Pick-a-Pic, and PartiPrompts test sets.}
    \label{fig:flux_1}
   % \vspace{-8pt}
\end{figure*}

\begin{figure*}[h]
    \centering
    % \vspace{-5pt}
    \includegraphics[width=.98\linewidth]{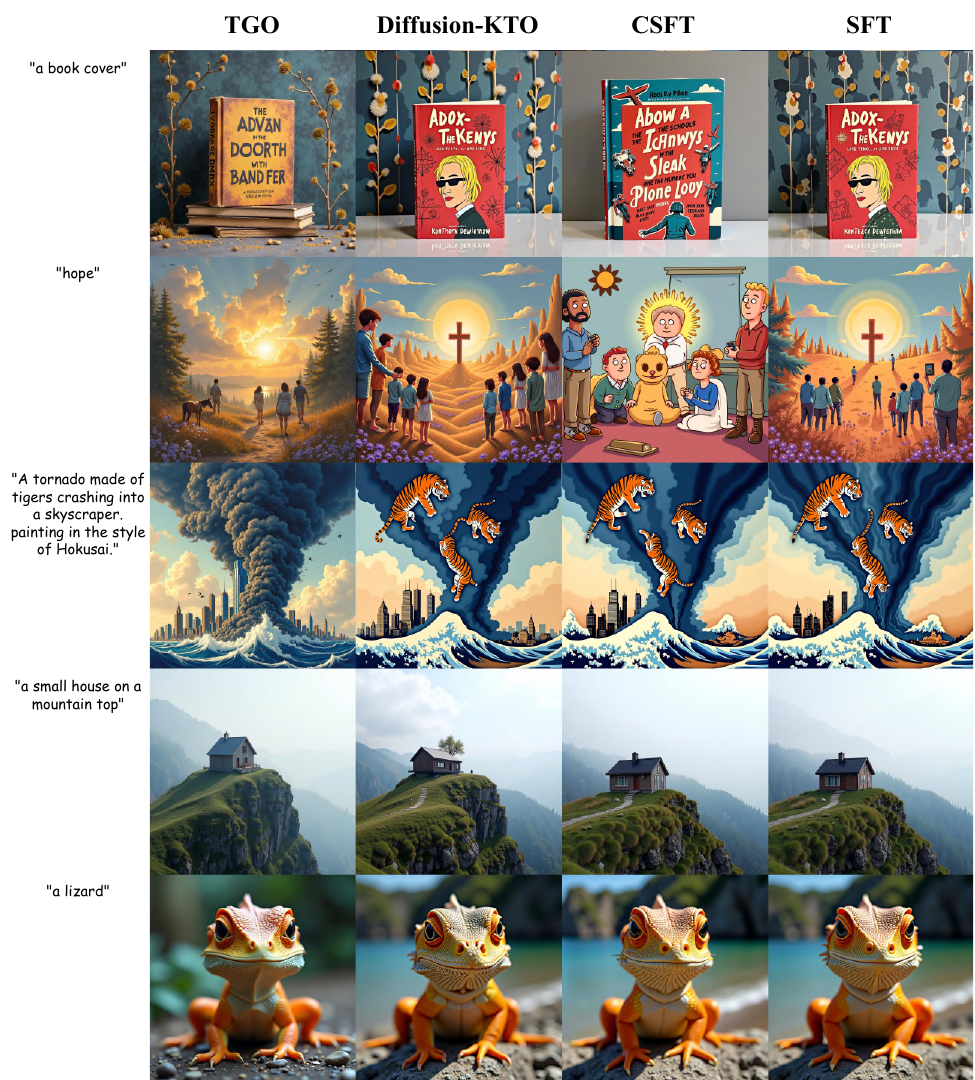}
    
    \caption{More qualitative comparison on Flux. Each column corresponds to one finetuning method (from left to right: TGO (ours), Diffusion-KTO, CSFT and SFT). Each row shows images generated from the same text prompt (listed on the left), sampled from the HPS~v2, Pick-a-Pic, and PartiPrompts test sets.}
    \label{fig:flux_2}
   % \vspace{-8pt}
\end{figure*}

\section{Additional Qualitative Results on Text-to-Image Generation}
\label{app:qualitative_results}

We also provide additional qualitative examples for text-to-image generation. 
Figures~\ref{fig:app_sd15_2} and \ref{fig:app_sd15_3} show further side-by-side comparisons of images generated by SD v1.5 fine-tuned with different alignment methods.

Figures~\ref{fig:mei_1}, \ref{fig:mei_2}, \ref{fig:flux_1} and \ref{fig:flux_2} show analogous side-by-side comparisons for Meissonic and Flux fine-tuned with different alignment methods.

Across all comparison figures, TGO consistently outperforms both supervised fine-tuning and prior preference-alignment baselines, producing images that better match the prompts in both semantics and visual quality.

\begin{table*}[!t]
\centering
\vspace{-5pt}
\caption{Ablation of TGO hyperparameters on Stable Diffusion v1.5: win rate (\%) of the baseline configuration ($\beta{=}1, c{=}5, T{=}10^{-3}$) against other variants, evaluated with five reward models.}
\label{tab:tgo-ablation-win}
\setlength{\tabcolsep}{2pt}
\footnotesize
% \resizebox{1.0\linewidth}{!}{   
\begin{tabular}{c|c|ccccc}
\toprule 
\multirow{2}{*}{Fixed Hyperparameters} & \multirow{2}{*}{Ablation Settings} & \multicolumn{5}{c}{Win rate of baseline vs. ablations} \\ 
\cmidrule(lr){3-7}
 &  & HPS v2 & PickScore & ImageReward & Aesthetic & CLIP \\ 
\midrule 
\multirow{5}{*}{$c{=}10, T{=}10^{-3}$}    
& $\beta{=}0.1$       
    & 55.48\% & 58.73\% & 57.17\% & 65.64\% & 56.60\% \\
& $\beta{=}0.5$  
    & 54.90\% & 58.49\% & 57.54\% & 65.97\% & 53.30\% \\
& $\beta{=}1$  
    & 100.00\% & 100.00\% & 100.00\% & 100.00\% & 100.00\% \\
& $\beta{=}5$          
    & 60.85\% & 60.38\% & 53.30\% & 54.25\% & 50.00\% \\
& $\beta{=}10$       
    & 99.53\% & 89.86\% & 98.82\% & 97.64\% & 99.29\% \\

\midrule 
\multirow{5}{*}{$\beta{=}1, T{=}10^{-3}$}    
& $c{=}1$          
    & 66.87\% & 66.54\% & 65.45\% & 59.49\% & 67.09\% \\
& $c{=}2$          
    & 66.73\% & 66.03\% & 63.09\% & 59.09\% & 67.19\% \\
& $c{=}5$  
    & 100.00\% & 100.00\% & 100.00\% & 100.00\% & 100.00\% \\
& $c{=}10$         
    & 45.28\% & 63.02\% & 51.95\% & 55.43\% & 55.71\% \\
& $c{=}20$         
    & 45.28\% & 60.87\% & 51.30\% & 54.27\% & 51.01\% \\

\midrule 
\multirow{5}{*}{$\beta{=}1, c{=}10$}    
& $T{=}10^{-1}$       
    & 57.46\% & 76.79\% & 59.67\% & 68.16\% & 75.08\% \\
& $T{=}10^{-2}$  
    & 59.48\% & 76.32\% & 60.07\% & 69.36\% & 74.13\% \\
& $T{=}10^{-3}$  
    & 100.00\% & 100.00\% & 100.00\% & 100.00\% & 100.00\% \\
& $T{=}10^{-4}$         
    & 59.20\% & 62.26\% & 54.25\% & 55.66\% & 55.66\% \\
& $T{=}10^{-5}$         
    & 58.25\% & 66.27\% & 55.19\% & 56.13\% & 55.42\% \\
\bottomrule
\end{tabular}%}
\vspace{-0.05in}
\end{table*}

\clearpage

\section{Ablation Study}
\label{app:ablation}

In this section, we ablate the key hyperparameters of our threshold-guided loss: the temperature $T$ used in the diffusion log-likelihood approximation, the difference scaling factor $c$, and the preference strength $\beta$. All experiments are conducted on Stable Diffusion v1.5 with Pick-a-Pic v2. We present win rates of different combinations of key hyperparameters in Tab.~\ref{tab:tgo-ablation-win}. Overall, for SD v1.5 the best TGO hyperparameter setting is $\beta = 1$, $c = 5$, and $T = 0.001$. Other foundation models may favor different settings, but this combination serves as a strong default in practice.

\section{Pseudocode of the Threshold-Guided Loss}
\label{app:Pseudocode}

We provide PyTorch-style pseudocode for the threshold-guided loss. The implementation closely follows the formulation in Section~\ref{sec:upo_impl}, using log-likelihood ratios between the current policy and the reference policy, reweighted by relative scores:
\begin{lstlisting}[style=pycode]
import torch
import torch.nn.functional as F

def compute_tgo_loss(log_probs, ref_log_probs, relative_scores,
                     beta, c):
    """Args:
        log_probs: log pi_theta(y|x), shape (B,)
        ref_log_probs: log pi_ref(y|x), shape (B,)
        relative_scores: r(y) - tau, shape (B,)
        beta: temperature for reward difference
        c: weight scaling for RM difference
    """
    # Implicit reward difference induced by the policy ratio
    reward_diff = beta * (log_probs - ref_log_probs)   # (B,)

    # Preference direction and confidence weighting
    signs   = torch.sign(relative_scores)              # (B,)
    weights = 1 + c * relative_scores.abs()            # (B,)

    # Binary logistic loss (use log1p for numerical stability)
    sig = torch.sigmoid(reward_diff)
    pos_loss = -weights * torch.log(sig + 1e-12)
    neg_loss = -weights * torch.log1p(-sig + 1e-12)

    loss = torch.where(signs >= 0, pos_loss, neg_loss)
    return loss.mean()
\end{lstlisting}

\end{document}